\theoremstyle{plain}
\newtheorem{theorem}{Theorem}[section]
\newtheorem{corollary}[theorem]{Corollary}
\theoremstyle{definition}
\theoremstyle{remark}
\definecolor{commentgreen}{HTML}{007F00}
\newcommand{\Comment}[1]{\quad \textcolor{commentgreen}{$\triangleright$ #1}}
\newcommand{\lineComment}[1]{\STATE \textcolor{commentgreen}{$\triangleright$ #1}}
\icmltitlerunning{Fast Large Language Model Collaborative Decoding via Speculation}
\begin{document}

\twocolumn[
\icmltitle{Fast Large Language Model Collaborative Decoding via Speculation}



\icmlsetsymbol{equal}{*}

\begin{icmlauthorlist}
\icmlauthor{Jiale Fu}{equal,seu,comp}
\icmlauthor{Yuchu Jiang}{equal,seu,comp}
\icmlauthor{Junkai Chen}{seu,comp}
\icmlauthor{Jiaming Fan}{seu,comp}
\icmlauthor{Xin Geng}{seu,comp}
\icmlauthor{Xu Yang}{seu,comp}
\end{icmlauthorlist}

\icmlaffiliation{seu}{Southeast University}

\icmlaffiliation{comp}{Key Laboratory of New Generation Artificial Intelligence Technology and
 Its Interdisciplinary Applications (Southeast University), Ministry of Education, China}

\icmlcorrespondingauthor{Xu Yang}{xuyang\_palm@seu.edu.cn}

\icmlkeywords{Collaborative Decoding, Speculative Decoding, Inference Acceleration, Large Language Models}

\vskip 0.3in
]



\printAffiliationsAndNotice{\icmlEqualContribution} 

\begin{abstract}
Large Language Model (LLM) collaborative decoding techniques improve output quality by combining the outputs of multiple models at each generation step, but they incur high computational costs. In this paper, we introduce \textbf{Collaborative decoding via Speculation (CoS)}, a novel framework that accelerates collaborative decoding without compromising performance. Inspired by Speculative Decoding—where a small proposal model generates tokens sequentially, and a larger target model verifies them in parallel, our approach builds on two key insights: (1) the verification distribution can be the combined distribution of both the proposal and target models, and (2) alternating each model as the proposer and verifier can further enhance efficiency. We generalize this method to collaboration among \(n\) models and theoretically prove that CoS is never slower than standard collaborative decoding, typically achieving faster speed. Extensive experiments demonstrate CoS is \textbf{1.11x–2.23x} faster than standard collaborative decoding without compromising generation quality. Our code is available at \url{https://github.com/Kamichanw/CoS/}.
\end{abstract}

\section{Introduction} 

Recently, large language models (LLMs) have demonstrated impressive performance across a wide range of tasks. Beyond advances in individual models—such as architectural innovations and training techniques—there is increasing interest in collaborative approaches involving multiple LLMs \cite{lu2024merge,chen2025harnessing}. A key class of these methods combines information from multiple models (e.g., probability distributions or logits) during token generation to improve next-token prediction and capitalize on the complementary strengths of different models. For instance, ensembling methods \cite{yu2024breaking,huang2024ensemble,yao2024determine} average prediction distributions from multiple models; contrastive decoding \cite{li2023contrastive,o2023contrastive} improves generation quality and reduces hallucinations by subtracting the outputs of a smaller model from a larger one; and decoding-time realignment \cite{Liu2024decoding,shi2024decoding} jointly uses an aligned and an unaligned model during decoding to enable flexible control over alignment. In this paper, we refer to such approaches collectively as \textit{collaborative decoding}.

\begin{figure}[t]
    \centering
    \includegraphics[width=\linewidth]{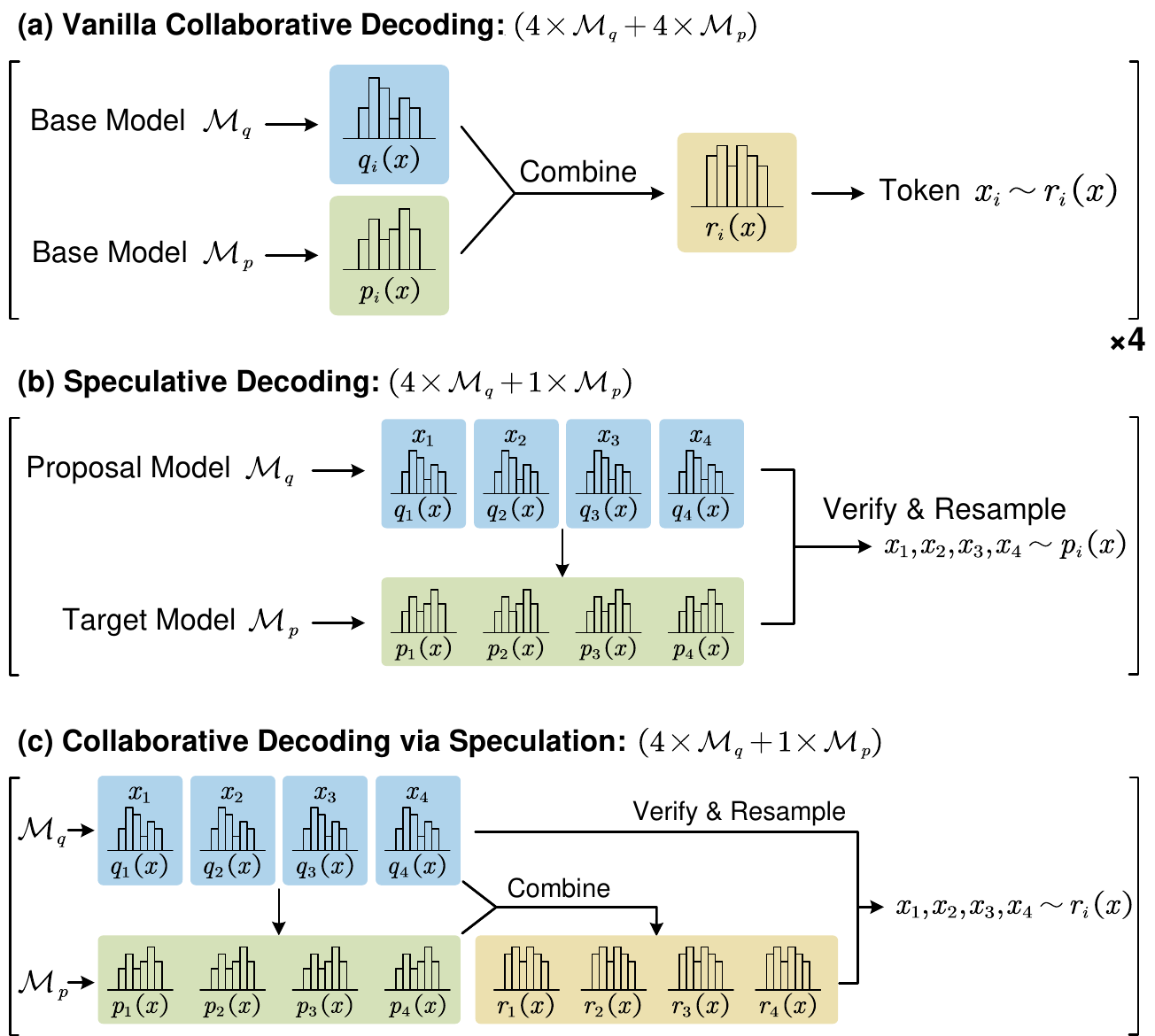}
    \caption{Comparison of (a) vanilla collaborative decoding, (b) speculative decoding, and (c) collaborative decoding via speculation. In (b) and (c), each discrete blue block represents a probability calculated by one forward pass of $\mathcal{M}_q$, while the continuous green block indicates the joint distribution requires only one forward pass of $\mathcal{M}_p$.}
    \label{fig:ob1}
\end{figure}

Despite the significant progress in collaborative decoding, a key challenge persists: combining outputs from multiple models requires each model to perform a separate forward pass, which substantially slows down inference compared to using a single model. This raises a crucial question: can we speed up collaborative decoding without compromising quality? To address this, we propose \textbf{Collaborative decoding via Speculation (CoS)}, a novel framework that accelerates any form of collaborative decoding while maintaining output quality. The core idea of CoS comes from Speculative Decoding (SD)~\cite{xia2023speculative,leviathan23}.

Speculative Decoding is a technique designed to accelerate LLM inference without sacrificing performance. As depicted in ~\cref{fig:ob1}~(b), it uses a smaller but more efficient proposal model $\mathcal{M}_q$ to rapidly generate proposal tokens, which are then verified in parallel by a larger target model $\mathcal{M}_p$. The target model accepts a subset of these proposal tokens, enabling the generation of multiple tokens in a single forward pass, thus significantly accelerating inference. Moreover, by employing specific acceptance-rejection criteria, the generated tokens can be considered as samples drawn from the distribution of the target model, thus ensuring the generation quality. In this paper, we extend speculative decoding to LLM collaborative decoding based on the following two observations.

\begin{figure}[t]
    \centering
    \subfloat[Standard Collaborative Decoding]{
    \includegraphics[width=0.99\linewidth]{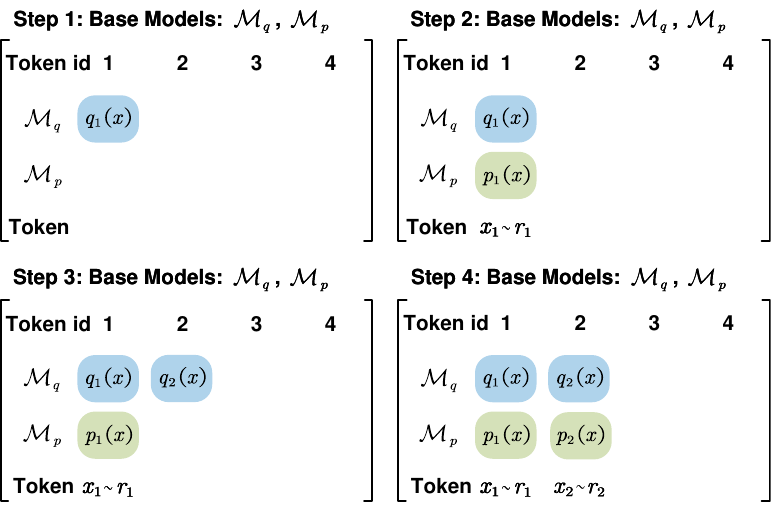}
    }\\
    \subfloat[Alternate Proposal Framework]{
    \includegraphics[width=0.99\linewidth]{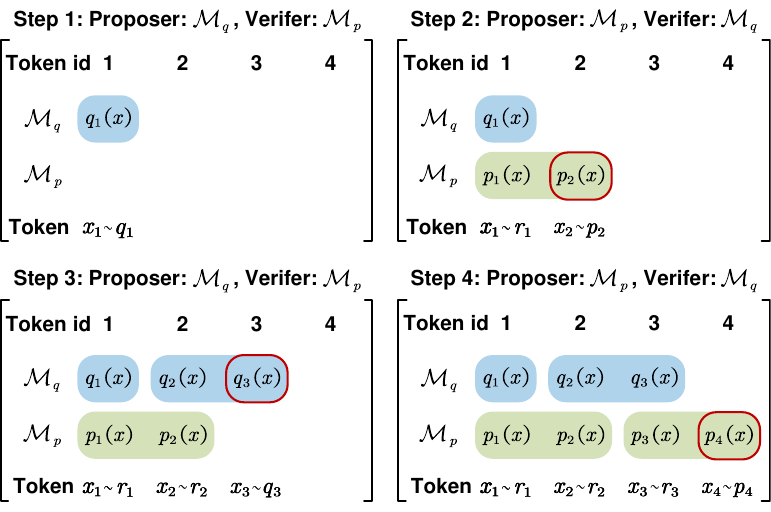}
    }
    \caption{The sketch of Alternate Proposal Framework. A continuous colored block indicates a single model invocation, with the bonus token highlighted in a red rounded box. Beginning from Step 2, \(\mathcal{M}_q\) and \(\mathcal{M}_p\) are invoked alternately. Each invocation involves both the verification of the current token and the generation of a bonus token. For clarity, we assume that the proposal length for each model is 1 and that all proposed tokens are accepted.}
    \label{fig:ob2}
\end{figure}

\textbf{First, SD allows not only sampling from the target model’s distribution,} but also sampling from any combined distribution of the proposal model and target model. In vanilla SD, the target model’s distribution is directly employed for token verification and resampling, ensuring that the generated tokens align with the target model’s distribution. Similarly, we find that if the combined distribution is used for verification and resampling, as illustrated in \cref{fig:ob1}~(c), the generated tokens will follow the combined distribution. We refer to this generalization of SD to collaborative decoding as \textit{Naive-CoS}. Naive-CoS significantly reduces the number of model invocations required. For instance, as shown in \cref{fig:ob1}~(a), generating four tokens with the vanilla collaborative decoding necessitates four invocations to both $\mathcal{M}_q$ and $\mathcal{M}_p$, while Naive-CoS, in the optimal case, requires only four invocations to $\mathcal{M}_q$ and a single invocation to $\mathcal{M}_p$.

\textbf{Secondly, alternating each model as proposer and verifier can further accelerate the collaboration process.} In standard SD, the proposer and verifier are fixed, with one model consistently serving as the proposer and the other as the verifier. However, we observe that in the collaborative decoding setting, this static assignment is suboptimal, as it fails to fully leverage the bonus token. In SD, when all tokens from the proposal model are accepted by the target model, the target model will naturally generate an additional token, referred to as the \textit{bonus token}. However, since the bonus token is drawn from the target model's distribution rather than the combined distribution, it cannot be directly appended to the output of collaborative decoding. A naive solution might be to discard the bonus token or to re-query the proposal model and compute the combined distribution. Instead, we propose a more efficient approach: treating the bonus token as a proposal from the target model, which is then verified by the proposer model. This insight leads to the \textit{Alternate Proposal Framework}, illustrated in \cref{fig:ob2}. Combined with the Naive-CoS, the alternate proposal framework forms the proposed CoS.  We further extend CoS to the general case of $n$-model collaboration in \cref{subsec:tetris}.

As shown in \cref{fig:ob2}(a), in standard collaborative decoding, each model invocation can generate only one probability distribution, so generating $n$ tokens requires $2n$ model invocations. With the alternate proposal framework (Figure \cref{fig:ob2}(b)), each invocation can generate two distributions in the optimal case, thereby doubling the generation efficiency.

We establish the effectiveness of CoS through both theoretical and experimental perspectives. Theoretically, we derive an expected improvement factor to quantify its acceleration and prove that CoS is guaranteed to be at least as efficient as the standard collaborative decoding, typically achieving greater speed. Additionally, in the weighted ensemble setting—the most common collaborative decoding setting—we demonstrate that CoS maintains a provable lower bound on the acceptance rate, ensuring consistently high efficiency. Experimentally, we conduct extensive experiments across various tasks, including code generation, mathematical reasoning, multi-task understanding, and text summarization. Our evaluation covers multiple LLM pairs, including Llama, Vicuna, and Qwen series, under both two-model and three-model configurations. The results show that CoS consistently achieves the highest acceleration, with speedups of \textbf{1.34x–1.85x} for weighted ensemble and \textbf{1.11x–2.23x} for contrastive decoding. 

In summary, our key contributions are as follows: 
(1) We extend speculative decoding to the collaborative decoding setting by refining its verification mechanism, introducing a Naive-CoS that significantly improves efficiency.
(2) We incorporate an alternate proposal framework into the Naive-CoS to get the final CoS, further boosting inference speed.
(3) Through extensive theoretical analysis and experimental evaluation, we demonstrate that our method achieves substantial acceleration while maintaining a lower bound, ensuring it never underperforms compared to standard collaborative decoding.

\section{Related Work}
\noindent\textbf{LLM Collaborative Decoding.} In this paper, LLM collaborative decoding refers to the integration of information from multiple LLMs when generating the next token. This typically involves combining the output probabilities or logits to compute a final probability distribution. This method can be used to improve overall performance, especially in solving complex tasks\cite{han2025ateb,han2024hybridmind}. A common approach is model ensembling, which averages or applies weighted averaging to the probability distributions of multiple LLMs to derive the final sampling distribution \cite{yu2024breaking,huang2024ensemble,yao2024determine}. Studies have shown that this technique can enhance both performance and safety \cite{li2024purifying}. Another method, contrastive decoding, is based on the observation that smaller models tend to produce noisier outputs. By subtracting the logits of a smaller model from those of a larger one, a cleaner and more reliable set of logits can be obtained, resulting in higher-quality outputs \cite{li2023contrastive,o2023contrastive}. Finally, decoding-time realignment enables flexible alignment with human preferences during decoding by linearly combining the logits of a human-aligned and an unaligned model, thereby balancing performance with alignment objectives \cite{Liu2024decoding,shi2024decoding}.

Our proposed method, CoS, represents an orthogonal approach to existing collaborative decoding techniques. It is designed to substantially improve inference speed while preserving the benefits of collaboration. Importantly, CoS is not restricted to accelerating the three methods discussed above; its generality allows it to enhance the efficiency of any collaborative decoding approach, including those yet to be developed.

\noindent\textbf{Speculative Decoding.}
Speculative decoding \cite{xia2023speculative,leviathan23,chen23} can be categorized into two main areas: proposal model design and verification design. In the first category, proposal models are designed to generate tokens that are more likely to be accepted by the verifier. This includes independent proposal models, such as distillation-based method \cite{zhou24} and target-informed models that incorporate information of verifier \cite{zhang2024,elhoushi2024,monea23,yi2024,monea23,li24a,sun2024triforce}.

The second category optimizes target model's verification process to improve decoding efficiency, following two main research directions. The first one increases proposal tokens and uses structured attention mechanisms \cite{miao2023specinfer, cai24,li24b,gong24} to validate multiple candidates simultaneously. The second direction modifies the verification strategy itself, employing methods like joint probability density estimation \cite{anonymous24a}, Monte Carlo tree search \cite{hu24}, and a linear binary classifier \cite{anonymous24b}.

The most closely related work is Speculative Contrastive Decoding (SCD) \cite{scd}, which combines outputs from both large and small models during the verification phase to form a contrastive decoding distribution. Operationally, SCD can be seen as a special case of Naive-CoS in the contrastive decoding setting. The method proposed in this paper differs from SCD in three key ways: (1) CoS is more broadly applicable and can accelerate any collaborative decoding approach; (2) it introduces an alternative proposal framework that further improves decoding speed; and (3) it provides a complete theoretical analysis, ensuring inference efficiency comparable to standard collaborative decoding methods.

\section{Collaborative Decoding via Speculation}

\subsection{Speculative Decoding}

Unlike other acceleration methods \cite{sun2024shadowkv}, speculative decoding (SD) is a technique designed to speed up inference while maintaining the quality of generated outputs. It involves two phases: the proposal phase and the verification phase. During the proposal phase, a lightweight proposal model sequentially generates proposal tokens. In the verification phase, a larger target model verifies these tokens in parallel. Furthermore, by incorporating appropriate acceptance-rejection criteria, the technique ensures that the generated tokens align precisely with the target model’s distribution, thus maintaining high-quality results.

Specifically, in the proposal phase, the proposal model $\mathcal{M}_q$ generates a sequence of length $\gamma$, denoted as:
\begin{equation}
(x_{i+1}, x_{i+2}, \ldots, x_{i+\gamma}) \sim \prod_{j=1}^{\gamma} q_{i+j}(x).
\end{equation}
Here, \(x_{i+j}\) represents the token generated at position \(i+j\), and \(q_{i+j}(x) \triangleq q(x_{i+j} \mid x_{\leq i+j-1})\) is the conditional probability distribution computed by $\mathcal{M}_q$ over \(x_{i+j}\), given the previously generated sequence \(x_{\leq i+j-1}\).

In the verification phase, the target model \(\mathcal{M}_p\) executes a forward pass, producing \(\gamma+1\) target distributions: \(p_{i+1}(x), \dots, p_{i+\gamma}(x), p_{i+\gamma+1}(x)\). The first $\gamma$ distributions are subsequently used to validate the proposal tokens generated in the proposal phase. Specifically, a proposal token \(x_{i+j}\) is accepted if the following condition holds:
\begin{equation}\label{eqn:verify}
u_j \le \min\left(1, \frac{p_{i+j}(x)}{q_{i+j}(x)}\right)
\end{equation}
where \(u_j \sim U(0,1)\) represents a uniformly distributed random variable. If the token \(x_{i+j}\) is rejected, the subsequent tokens \(x_{i+j+1}, \dots, x_{i+\gamma+1}\) are discarded, and \(x_{i+j}\) is sampled from the distribution \(\text{norm}(\max(0, p_{i+j} - q_{i+j}))\). If all \(\gamma\) tokens are accepted, an additional token is directly sampled from \(p_{i+\gamma+1}\) and appended to the generated sequence, referred to as the \textit{bonus token} in our paper.

By iteratively alternating between the proposal and verification phases, SD improves inference speed while ensuring the generated tokens align with the target model's distribution.

\subsection{Naive-CoS}\label{subsec:sd_for_ens}

As discussed above, vanilla SD can only accelerate the inference of a single model. In this subsection, we will introduce how to apply SD to scenarios involving an arbitrary combination of two models. Specifically, let $q_{i}(x)$ and $p_{i}(x)$ denote the distributions of token $x_{i}$ given by the proposal model and the target model, respectively, and let $l^q_{i}$ and $l^p_{i}$ be the corresponding logits. Then, the combined distribution $r_{i}(x)$ can be expressed as
\begin{equation}
    r_{i}(x) = \mathcal{C}(q_{i}(x), p_{i}(x))\ \ \text{or}\ \ \mathcal{C'}(l^q_{i}, l^p_{i}),
\end{equation}
where $\mathcal C(\cdot)$ represents the combination function at the probability level, while $\mathcal C'(\cdot)$ is at logits level. For example, the common weighted ensemble
that uses probability for weighted summation can be expressed as
\begin{equation}\label{eqn:weighted}
    r_i(x) = \mathcal{C}(q_{i}(x), p_{i}(x)) = \lambda q_i(x) + (1 - \lambda) p_i(x),
\end{equation}
while contrastive decoding can be represented as
\begin{equation}\label{eqn:contrastive}
r_i(x) = \mathcal{C'}(l^q_{i}, l^p_{i}) = \text{Softmax}(l^p_{i} - \mu l^q_{i}).
\end{equation}

We note that by making slight modifications to the vanilla SD, the generated tokens can align with the combined distribution. Specifically, before verification, we first compute the combined distribution $r_i(x)$ and update the verification formula in ~\cref{eqn:verify} as follows:
\begin{equation}
    u_j \leq \min \left( 1, \frac{r_{i+j}(x)}{q_{i+j}(x)} \right).
\end{equation}
Then, if the token is rejected, we resample $x_{i+j}$ from the distribution $\text{norm}(\max(0, r_{i+j} - q_{i+j}))$. 

We theoretically prove the correctness of Naive-CoS, that is, the tokens generated by the above sampling process precisely align with the combined distribution, with an acceptance rate $\alpha$ of
\begin{equation}\label{eqn:alpha}
    \alpha = 1 - \frac{1}{2}D_{\text{TV}}(q, r),
\end{equation}
where $D_{\text{TV}}(q, r)$ is the total variation distance, defined as $D_{\text{TV}}(q, r) = \sum_{x\in \mathcal V} \left| q(x) - r(x) \right|$, where $\mathcal V$ is the set of all tokens. The proof is provided in ~\cref{apd:correctness}.

\textbf{Analysis of speed improvement.} In speculative decoding, inference speed is predominantly influenced by the acceptance rate \(\alpha\), with a higher acceptance rate leading to more substantial speed improvements. In this part, we first analyze the theoretical speed improvement when $\alpha$ is known. When $\alpha$ is unknown, we focus on weighted ensemble scenario and provide a lower bound for $\alpha$. With this bound, we derive a series of favorable acceleration properties.

\begin{theorem}\label{thm:improvement}
    Let $\gamma$ be the proposal length and $c$ be the cost coefficient, defined as the ratio between the time for a single invocation of the proposal model and the target model. Then, the expected speed improvement factor is $\frac{(1 - \alpha^{\gamma}) (1 + c)}{(1 - \alpha)(1 + c \gamma)}$.
\end{theorem}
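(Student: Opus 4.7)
The strategy is to compare throughput (tokens per unit time) of the vanilla ensemble and of the speculative-based ensemble, then take the ratio. Normalize so that a single forward pass of $\mathcal{M}_p$ costs $1$ and a single pass of $\mathcal{M}_q$ costs $c$. The vanilla ensemble spends one pass of each model per generated token, so its per-token cost is $1+c$. One iteration of the speculative-based ensemble invokes $\mathcal{M}_q$ exactly $\gamma$ times in the proposal phase and $\mathcal{M}_p$ once in the verification phase, giving per-iteration cost $c\gamma + 1$. What remains is to compute the expected number $T_\gamma$ of tokens emitted per iteration; the improvement factor is then $\dfrac{T_\gamma/(c\gamma+1)}{1/(1+c)} = \dfrac{T_\gamma(1+c)}{c\gamma+1}$, and we need to show $T_\gamma = \dfrac{1-\alpha^\gamma}{1-\alpha}$.

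To compute $T_\gamma$, let $K$ be the number of proposal tokens accepted by the verification test before the first rejection (capped at $\gamma$). By \cref{eqn:alpha} each independent test accepts with probability $\alpha$, so $P(K \ge j) = \alpha^j$ for $j = 0, 1, \dots, \gamma$, and in particular $P(K = \gamma) = \alpha^\gamma$. I would then split into cases: if $K < \gamma$ (which occurs with probability $1 - \alpha^\gamma$), a rejection triggers a resample from $\mathrm{norm}(\max(0, r - q))$, yielding $K+1$ output tokens; if $K = \gamma$, all proposals are accepted and the iteration emits exactly $\gamma$ tokens, since in the speculative-based ensemble the extra forward pass of $\mathcal{M}_p$ at position $i+\gamma+1$ produces only $p_{i+\gamma+1}$, not the ensemble distribution $r_{i+\gamma+1}$, so no bonus token can be appended without violating correctness. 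Therefore
\begin{equation}
T_\gamma \;=\; \mathbb{E}[K] + P(K < \gamma) \;=\; \sum_{j=1}^{\gamma} \alpha^j + (1-\alpha^\gamma).
\end{equation}

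Using $\sum_{j=1}^{\gamma} \alpha^j = \alpha(1-\alpha^\gamma)/(1-\alpha)$ and combining over the common denominator $1-\alpha$ gives $T_\gamma = (1-\alpha^\gamma)/(1-\alpha)$. Substituting back into the ratio yields the claimed factor $\dfrac{(1-\alpha^\gamma)(1+c)}{(1-\alpha)(1+c\gamma)}$.

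\textbf{Main obstacle.} The only subtle point is the boundary case $K=\gamma$. It is tempting to reuse Leviathan et al.'s formula $(1-\alpha^{\gamma+1})/(1-\alpha)$, which counts the bonus token, but here the bonus token is distributed according to $p$ rather than the ensemble $r$, so it must be excluded to preserve correctness of the speculative-based ensemble. Once this accounting is handled, the algebra is routine and the ratio drops out immediately.
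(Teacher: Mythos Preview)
Your proposal is correct and follows essentially the same approach as the paper: compute the expected number of tokens emitted per proposal--verification cycle (excluding the bonus token), divide by the per-cycle cost $c\gamma+1$, and compare with the vanilla per-token cost $1+c$. The only cosmetic difference is that the paper writes down the pmf $P(\#\text{tokens}=i)=\alpha^{i-1}(1-\alpha)$ for $i<\gamma$ and $P(\#\text{tokens}=\gamma)=\alpha^{\gamma-1}$ and sums $\sum_i i\,P(\#\text{tokens}=i)$ directly, whereas you reach the same $(1-\alpha^\gamma)/(1-\alpha)$ via the tail-sum identity $\mathbb{E}[K]=\sum_{j\ge 1}P(K\ge j)$; your explicit remark that the bonus token must be dropped (so the formula differs from Leviathan et al.'s $(1-\alpha^{\gamma+1})/(1-\alpha)$) is exactly the point the paper flags as well.
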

The proof of \cref{thm:improvement} is in \cref{apd:proof_of_improvement}. \cref{thm:improvement} provides the speed improvement factor when $\alpha$ is known. However, in most cases, $\alpha$ is unknown and requires extensive experiments to estimate. Nevertheless, we find that in the weighted ensemble scenario, which is the most common collaborative decoding, $\alpha$ has a lower bound.

\begin{theorem}\label{thm:lower_bound}
    If $\mathcal{C}(p, q) = \lambda q(x) + (1 - \lambda) p(x)$ and $q(x)$ is the proposal model. Then $\alpha$ has a lower bound of $\lambda$.
\end{theorem}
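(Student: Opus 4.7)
The plan is to work directly from the closed form for the acceptance rate established earlier, equation~(\ref{eqn:alpha}):
\begin{equation*}
\alpha = 1 - \tfrac{1}{2} D_{\text{TV}}(q, r),
\end{equation*}
and show that in the weighted case the total variation distance $D_{\text{TV}}(q, r)$ is controlled by $1-\lambda$. The key algebraic observation is that when $r(x) = \lambda q(x) + (1-\lambda) p(x)$, the pointwise difference factors cleanly: $r(x) - q(x) = (1-\lambda)\bigl(p(x) - q(x)\bigr)$. Summing absolute values over the vocabulary then yields $D_{\text{TV}}(q, r) = (1-\lambda)\, D_{\text{TV}}(p, q)$.

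From here I would invoke the elementary fact that for any two probability distributions on $\mathcal{V}$, the (unnormalized) total variation used in this paper satisfies $D_{\text{TV}}(p, q) = \sum_{x\in\mathcal V} |p(x) - q(x)| \le 2$, which follows from a one-line bound $|p(x)-q(x)| \le p(x) + q(x)$ and the fact that both distributions sum to one. Combining the two displayed inequalities gives $D_{\text{TV}}(q, r) \le 2(1-\lambda)$, and plugging this into the expression for $\alpha$ yields $\alpha \ge 1 - (1-\lambda) = \lambda$, which is the claim.

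There is essentially no hard step here: the whole argument rests on (i) the linearity $r - q = (1-\lambda)(p-q)$ special to the weighted mixture, and (ii) the universal bound $D_{\text{TV}} \le 2$. The only thing one has to be careful about is matching the exact normalization convention of $D_{\text{TV}}$ used in this paper (it is defined without the factor of $\tfrac{1}{2}$), which is why the factor $\tfrac{1}{2}$ appears in equation~(\ref{eqn:alpha}) and makes the final cancellation work out to give $\lambda$ rather than $2\lambda-1$ or some similar expression. If a tighter bound were desired one could retain the $D_{\text{TV}}(p,q)$ factor and obtain $\alpha \ge 1 - (1-\lambda)\tfrac{1}{2}D_{\text{TV}}(p,q)$, which recovers the stated bound in the worst case but shows the acceptance rate improves as the two models agree.
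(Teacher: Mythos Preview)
Your proof is correct and takes a slightly different route from the paper's. The paper starts from the equivalent form $\alpha = \sum_{x\in\mathcal V} q(x)\min\bigl(1, r(x)/q(x)\bigr)$, substitutes the mixture to obtain $\min\bigl(1, \lambda + (1-\lambda)\,p(x)/q(x)\bigr)$, and bounds this term pointwise below by $\lambda$, giving $\alpha \ge \lambda\sum_x q(x) = \lambda$. Your argument instead stays with the total variation formulation, factors $r - q = (1-\lambda)(p - q)$ to get the exact identity $D_{\mathrm{TV}}(q,r) = (1-\lambda)\,D_{\mathrm{TV}}(p,q)$, and only then applies the universal bound $D_{\mathrm{TV}}(p,q)\le 2$. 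The paper's route is marginally more direct and makes the equality case ($p$ and $q$ with disjoint supports) transparent from the pointwise inequality; your route has the advantage of producing the exact expression $\alpha = 1 - \tfrac{1}{2}(1-\lambda)\,D_{\mathrm{TV}}(p,q)$ along the way (your final ``tighter bound'' is in fact an equality), which is strictly more informative than the stated theorem and quantifies precisely how the acceptance rate improves as the two base models agree.
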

\begin{proof}
We have $\alpha = \sum_{x\in \mathcal V} q(x) \min\left(1, \frac{r(x)}{q(x)}\right)$, then $\alpha = \sum_{x\in \mathcal V} q(x) \min\left(1, \lambda + (1 - \lambda)\frac{p(x)}{q(x)}\right)$, and then we get $\alpha \geq \sum_{x\in \mathcal V} \lambda q(x) = \lambda.$
 
The equality holds if and only if $p(x)q(x) = 0$ for all $x \in \mathcal{V}$, which means that $p(x)$ and $q(x)$ do not overlap.
\end{proof}

\begin{corollary}\label{thm:we_lb}
    Assume that $\mathcal{C}(p, q) = \lambda q(x) + (1 - \lambda) p(x)$, and that $\mathcal M_q$ and $\mathcal M_p$ have comparable parameters. Then, by selecting an appropriate proposal model, $\alpha$ has a lower bound of at least 0.5.
\end{corollary}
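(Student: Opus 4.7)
The plan is to apply \cref{thm:lower_bound} twice, once for each possible choice of proposer, and then pick whichever assignment gives the stronger lower bound. First, I would observe that \cref{thm:lower_bound} directly yields $\alpha \geq \lambda$ when $q$ is taken as the proposer. Then, using the symmetric rewriting $\mathcal{E}(p,q) = (1-\lambda)\,p(x) + \lambda\, q(x)$, if instead $p$ is taken as the proposer, the same argument (with the roles of $p$ and $q$ swapped) gives $\alpha \geq 1-\lambda$.

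Next, I would invoke the hypothesis that $\mathcal{M}_q$ and $\mathcal{M}_p$ have comparable parameters to justify that either assignment is a legitimate choice of proposer: the speculative-based ensemble framework of \cref{subsec:sd_for_ens} itself imposes no formal restriction that the proposer be strictly smaller than the verifier, and the comparable-size hypothesis guarantees that using either model as the proposer remains practically viable (so that the cost coefficient $c$ in \cref{thm:improvement} is not pathologically distorted by the swap). Under this hypothesis, we are free to designate as proposer whichever model yields the larger acceptance-rate guarantee.

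Finally, combining the two cases gives $\alpha \geq \max(\lambda,\, 1-\lambda)$, and since for every $\lambda \in [0,1]$ at least one of $\lambda$ and $1-\lambda$ is at least $1/2$, we conclude $\alpha \geq 1/2$. The main---and essentially only---subtlety is the interpretation of the phrases \emph{appropriate proposal model} and \emph{comparable parameters}: these are precisely what authorize the role swap. The acceptance-rate inequality itself is immediate from \cref{thm:lower_bound} applied in each orientation, so no further distributional analysis is required.
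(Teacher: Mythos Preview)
Your proposal is correct and follows essentially the same argument as the paper: apply \cref{thm:lower_bound} in each orientation to get $\alpha \geq \lambda$ and $\alpha \geq 1-\lambda$, invoke the comparable-parameters hypothesis to permit either assignment of proposer, and conclude $\alpha \geq \max(\lambda, 1-\lambda) \geq 0.5$. The paper's proof is simply a terser version of what you wrote.
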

\begin{proof}
Since $\mathcal M_q$ and $\mathcal M_p$ have comparable parameters, either can serve as the proposal model. Therefore, $\alpha$ has a lower bound of $\max(\lambda, 1 - \lambda)$, which is at least 0.5.
\end{proof}

By utilizing the lower bound property, we demonstrate that the proposed Naive-CoS is guaranteed to be no slower than the weighted ensemble approach, and it is typically faster.

\begin{corollary}\label{cor:if_lambda}
    Assume that $\mathcal{C}(p, q) = \lambda q(x) + (1 - \lambda) p(x)$, then if $\lambda > \frac{c}{1 + c}$, there exists a value of $\gamma$ that enhances the inference speed.
\end{corollary}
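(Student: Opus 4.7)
The plan is to combine the improvement-factor formula from \cref{thm:improvement} with the acceptance-rate lower bound from \cref{thm:lower_bound}, and to exhibit a concrete $\gamma$ for which the improvement factor strictly exceeds $1$. Since \cref{thm:improvement} gives the expected speedup as
\[
f(\gamma) \;=\; \frac{(1-\alpha^{\gamma})(1+c)}{(1-\alpha)(1+c\gamma)},
\]
I need to find some integer $\gamma \ge 1$ with $f(\gamma)>1$. The degenerate case $\gamma=1$ gives $f(1)=1$, so the natural candidate to try first is $\gamma=2$.

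Next I would substitute $\gamma=2$ and simplify. Using the factorization $1-\alpha^{2}=(1-\alpha)(1+\alpha)$, the $(1-\alpha)$ cancels and I get
\[
f(2) \;=\; \frac{(1+\alpha)(1+c)}{1+2c}.
\]
Then $f(2)>1$ is equivalent to $(1+\alpha)(1+c) > 1+2c$, which after expanding and cancelling reduces to $\alpha(1+c) > c$, i.e.\ $\alpha > \tfrac{c}{1+c}$.

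The final step is to invoke \cref{thm:lower_bound}: in the weighted ensemble setting with $q$ as proposer, $\alpha \ge \lambda$. Hence the hypothesis $\lambda > \tfrac{c}{1+c}$ forces $\alpha > \tfrac{c}{1+c}$, and therefore $f(2)>1$, so $\gamma=2$ already enhances the inference speed, which proves the corollary.

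I do not anticipate a serious obstacle here: the algebra is elementary and the lower bound is already in hand. The only mild subtlety is recognising that trying $\gamma=2$ is sufficient—one does not need to analyse $f$ over a continuous range or optimise $\gamma$—because the threshold $\lambda > c/(1+c)$ matches exactly the condition $f(2)>1$ once the $(1-\alpha)$ factor is cancelled.
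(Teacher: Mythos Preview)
Your proposal is correct and follows exactly the paper's own argument: set $\gamma=2$, reduce the speedup condition $f(2)>1$ to $\alpha>\tfrac{c}{1+c}$, and then invoke the lower bound $\alpha\ge\lambda$ from \cref{thm:lower_bound}. The only difference is that you spell out the algebra (the factorisation $1-\alpha^2=(1-\alpha)(1+\alpha)$ and the expansion) that the paper leaves implicit.
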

\begin{proof}
Consider $\gamma = 2$, and solve the inequality $\frac{(1 - \alpha^\gamma)(1 + c)}{(1 - \alpha) (1 + c\gamma)} > 1$. Solving this inequality yields $\alpha > \frac{c}{1+c}$. Since $\alpha \geq \lambda$ follows, establishing the corollary.
\end{proof}

\begin{corollary}\label{cor:weighted_effectiveness}
    Assume that $\mathcal{C}(p, q) = \lambda q(x) + (1 - \lambda) p(x)$. Then, for any $\lambda$, there exists a value of $\gamma$ such that the speed of the Naive-CoS is not slower than the vanilla weighted ensemble, and it is almost always faster.
\end{corollary}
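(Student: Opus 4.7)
The plan is to combine \cref{cor:if_lambda} with the freedom to relabel which model plays the proposer role, and to handle the boundary case via a trivial $\gamma = 1$ fallback that comes for free from \cref{thm:improvement}.

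First, for the ``not slower'' half, I would plug $\gamma = 1$ into the improvement factor of \cref{thm:improvement}: this gives $\frac{(1-\alpha)(1+c)}{(1-\alpha)(1+c)} = 1$ regardless of $\alpha$ and $c$. Hence $\gamma = 1$ always matches the vanilla ensemble's speed exactly, which already establishes the existence of a $\gamma$ for which the speculative-based ensemble is no slower than the vanilla one.

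Second, for the ``almost always faster'' half, I would write the two models as $\mathcal{M}_A, \mathcal{M}_B$ with per-invocation times $T_A, T_B$ and the ensemble as $\lambda p_A + (1-\lambda) p_B$, then exploit that either model may be cast as the proposer. Casting $\mathcal{M}_A$ as proposer puts the ensemble in the form $\lambda q + (1-\lambda) p$ with cost coefficient $c = T_A/T_B$, and \cref{cor:if_lambda} yields a strictly accelerating $\gamma$ whenever $\lambda > T_A/(T_A + T_B)$, i.e.\ $\lambda T_B > (1-\lambda) T_A$. Casting $\mathcal{M}_B$ as proposer instead rewrites the ensemble as $(1-\lambda) q + \lambda p$ with $c = T_B/T_A$, and \cref{cor:if_lambda} applied with effective weight $1-\lambda$ delivers strict speedup whenever $(1-\lambda) T_A > \lambda T_B$. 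These two inequalities are complementary, so off the measure-zero hyperplane $\lambda T_B = (1-\lambda) T_A$ at least one of the two configurations gives strict acceleration for some $\gamma$; on the hyperplane, Step 1 still gives equality of speeds.

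The main obstacle I would anticipate is the bookkeeping in the relabeling step: \cref{thm:lower_bound} and \cref{cor:if_lambda} are phrased with $q$ bound to the proposer's distribution, so when $\mathcal{M}_B$ assumes the proposer role the weight entering the lower bound on $\alpha$ is $1-\lambda$ rather than $\lambda$, and the cost coefficient flips from $T_A/T_B$ to $T_B/T_A$. Once these substitutions are handled consistently, the complementary-conditions dichotomy is immediate and the proof is essentially a one-liner on top of the previous corollaries.
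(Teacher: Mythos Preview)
Your proposal is correct and follows essentially the same route as the paper: both arguments hinge on swapping the proposer and verifier roles so that \cref{cor:if_lambda} yields the complementary conditions $\lambda > c/(1+c)$ and $\lambda < c/(1+c)$, covering everything off the boundary.

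The one difference is in how the boundary case $\lambda = c/(1+c)$ is handled. You fall back to $\gamma = 1$, where the improvement factor collapses to $1$ identically; this is a clean way to secure the ``not slower'' clause. The paper instead stays with $\gamma = 2$ and observes that $\alpha \ge \lambda = c/(1+c)$ forces the factor to be at least $1$, with equality only when $p$ and $q$ have disjoint supports. The paper's handling is thus marginally stronger---it recovers \emph{strict} speedup even on the boundary hyperplane except in the degenerate non-overlap case---so its ``almost always'' refers to a property of the distributions rather than to the measure-zero set of parameters $(\lambda, c)$. Your version reads ``almost always'' in parameter space, which is a legitimate interpretation but a slightly weaker conclusion. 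Either way the argument goes through.
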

\begin{proof}
As stated in ~\cref{cor:if_lambda}, if $\lambda > \frac{c}{1 + c}$, there exists a value of $\gamma$ that enhances the inference speed. If we swap the proposer and the verifier, the condition for acceleration changes to $1 - \lambda > \frac{1/c}{1 + 1/c}$, which simplifies to $\lambda < \frac{c}{1 + c}$.

If $\lambda \ne \frac{c}{1 + c}$, either $\lambda > \frac{c}{1 + c}$ or $\lambda < \frac{c}{1 + c}$ must hold, which ensures a speedup. Otherwise, if $\lambda = \frac{c}{1 + c}$, then $\alpha \geq \lambda = \frac{(1 - \alpha^{\gamma}) (1 + c)}{(1 - \alpha)(1 + c \gamma)}$, which ensures that the speed does not decline. Moreover, equality only holds when the two distributions do not overlap, which is almost impossible in practice.
\end{proof}

\textbf{Interpretable quality-speed tradeoff in vanilla SD.} In SD, some studies focus on relaxing the acceptance criteria for a higher acceptance rate to achieve faster inference, such as lossy SD \cite{zhou24} and typical acceptance \cite{cai24}. However, these methods often lack interpretability, that is, we do not know which distribution the generated tokens will follow. We find that Naive-CoS can naturally be an interpretable strategy for adjusting the quality-speed tradeoff in vanilla SD.

Specifically, we apply a weighted ensemble using the proposal and target models in SD. In this setup, when \(\lambda = 0\), the combined distribution aligns exactly with the target distribution, reducing the method to standard SD. For $\lambda > 0$, the acceptance rate is guaranteed to have a lower bound and greater than that of vanilla SD, as demonstrated in the proof of \cref{thm:lower_bound}, leading to greater acceleration. However, incorporating less precise information from a smaller model can introduce some performance degradation. This tradeoff provides a mechanism to balance quality and speed in speculative decoding.

In contrast to existing approaches, this method improves interpretability. This is because we know the distribution of the generated tokens after relaxation, i.e. combined distribution defined in \cref{eqn:weighted}. This allows us to design proposal models that accelerate inference without compromising performance, and potentially even enhance the model's capabilities in specific areas. For instance, some research suggests that ensembling a smaller model appropriately can improve safety \cite{wang2024mllm, li2024purifying}. The experimental results are shown in \cref{apd:tradeoff}.

\subsection{Alternate Proposal Framework}
\label{subsec:alter_proposal}
In ~\cref{subsec:sd_for_ens}, we explore the application of speculative decoding to LLM collaboration. However, we don't consider the bonus token, that is, the additional token generated when all proposal tokens are accepted. This is because the bonus token follows the distribution of the verifier rather than the combined distribution and can not be directly appended to the output sequence. In this subsection, we introduce a collaboration framework, termed the \textit{alternate proposal framework}, which effectively leverages the bonus token and demonstrates superior performance.

As shown in \cref{fig:ob2}, in the alternate proposal framework, the generation of a bonus token is treated as a proposal from the current verifier, which is subsequently verified by the current proposer. Specifically, let the proposer be denoted as $\mathcal{M}_q$ and the verifier as $\mathcal{M}_p$ with proposal lengths $\gamma_q$ and $\gamma_p$, respectively. If all tokens proposed by $\mathcal{M}_q$ are accepted, a total of $\gamma_q + 1$ tokens will be generated. The first $\gamma_q$ tokens follows the distribution $r_{i+j}(x) = \mathcal{C}(q_{i+j}(x), p_{i+j}(x))$, for $j=1, \dots, \gamma_q$, while the $\gamma_q + 1$-th token is drawn from $p_{i+\gamma_q + 1}(x)$. At this stage, the $\gamma_q + 1$-th token, referred to as bonus token, is treated as the initial token in $\mathcal{M}_p$'s proposal. Subsequently, $\mathcal{M}_p$ will generate an additional $\gamma_p - 1$ tokens to complete its proposal.

If any proposed tokens are rejected and no bonus token is generated, the default proposal model will take over as the proposal model. This default model is predefined and fixed. As outlined above, the two models alternate as proposers during the decoding process, which is why this approach is called the alternate proposal framework. The pseudocode for this framework is provided in \cref{alg:alternate}.

\textbf{Analysis of speed improvement.} We now analyze the speed improvement achieved by the alternative proposal framework. For the sake of clarity, we focus on a single cycle, which encompasses one proposal and one verification. In this cycle, both the proposer and verifier are fixed. Given that the decoding process is composed of multiple such cycles, the overall decoding performance can be inferred from the behavior of a single cycle.

First, similar to \cref{thm:improvement}, we provide the expected speed improvement factor for the alternate proposal framework.

\begin{theorem}\label{thm:alternate_improvement}
    Let $\mathcal M_q$ be the proposer and $\mathcal M_p$ be the verifier, then the expected speed improvement factor of the alternate proposal framework is $\frac{(1 - \alpha^{\gamma_q}) (1 + c)}{(1 - \alpha)(1 + c \gamma_q - \alpha^{\gamma_p} c)}$.
\end{theorem}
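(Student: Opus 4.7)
The plan is to mirror the derivation behind \cref{thm:improvement} but modify the two quantities that change under alternation: the expected number of tokens committed per cycle and the expected cost of a cycle. The speed-up factor will again come out as (tokens committed per cycle) $\times$ (cost per token under vanilla ensemble) divided by (expected cycle cost). The alternation only affects the denominator, because the bonus token is never directly committed in the current cycle; it is handed off to the next cycle as the first proposal of the new proposer.

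First I would compute the expected number of committed tokens in a cycle where $\mathcal{M}_q$ is the proposer. Exactly as in the speculative-based ensemble analysis, if $X$ is the number of consecutive accepts among the $\gamma_q$ proposed tokens, then the cycle commits $X+1$ tokens when $X<\gamma_q$ (accepted plus a resample) and $\gamma_q$ tokens when $X=\gamma_q$ (the bonus is \emph{not} appended, it is forwarded as $\mathcal{M}_p$'s opening proposal for the next cycle). Summing $P(X=j)=\alpha^j(1-\alpha)$ for $j<\gamma_q$ and $P(X=\gamma_q)=\alpha^{\gamma_q}$ gives the same telescoped expression
\begin{equation*}
\mathbb{E}[\text{committed}] \;=\; \frac{1-\alpha^{\gamma_q}}{1-\alpha},
\end{equation*}
which is identical to the numerator appearing in \cref{thm:improvement}.

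Next I would compute the expected cost of a cycle. A cycle always incurs one verification pass of $\mathcal{M}_p$ costing $T_p$, plus proposal calls from $\mathcal{M}_q$. The key observation is that when the previous cycle (in which $\mathcal{M}_p$ proposed $\gamma_p$ tokens and $\mathcal{M}_q$ verified) accepted all $\gamma_p$ proposals, it produced a bonus from $\mathcal{M}_q$; this bonus becomes $\mathcal{M}_q$'s first proposal for the current cycle at zero additional cost, so $\mathcal{M}_q$ only needs $\gamma_q-1$ further forward passes. Since the probability of the previous cycle accepting all $\gamma_p$ tokens is $\alpha^{\gamma_p}$, the expected proposal cost is $\alpha^{\gamma_p}(\gamma_q-1)T_q + (1-\alpha^{\gamma_p})\gamma_q T_q = (\gamma_q-\alpha^{\gamma_p})T_q$. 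Using $T_q = cT_p$, the expected cycle cost becomes $T_p\bigl(1+c\gamma_q-\alpha^{\gamma_p}c\bigr)$.

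Finally, since vanilla ensemble pays $T_p+T_q=T_p(1+c)$ per generated token, dividing the expected throughput per cycle by the expected cost per cycle and normalizing by vanilla cost per token yields the claimed factor $\frac{(1-\alpha^{\gamma_q})(1+c)}{(1-\alpha)\bigl(1+c\gamma_q-\alpha^{\gamma_p}c\bigr)}$. The main obstacle is the bookkeeping for the inherited bonus: one must resist double counting it (it should not enter $\mathbb{E}[\text{committed}]$ of the cycle in which it is \emph{produced}, because it has not yet been verified against the ensemble distribution), while correctly crediting it as a cost saving in the cycle in which it is \emph{consumed}. A secondary subtlety is that, strictly speaking, the acceptance rate when $\mathcal{M}_p$ proposes need not equal the acceptance rate when $\mathcal{M}_q$ proposes; the theorem is stated with a single symbol $\alpha$, which is justified by treating $\alpha$ as a cycle-dependent parameter and noting that the same argument goes through with proposer-specific rates $\alpha_q,\alpha_p$ replacing $\alpha$ in the appropriate positions.
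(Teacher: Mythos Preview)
Your proposal is correct and follows essentially the same approach as the paper: the paper computes the expected cycle cost as $\alpha^{\gamma_p}\bigl(1+c(\gamma_q-1)\bigr)+(1-\alpha^{\gamma_p})(1+c\gamma_q)=1+c\gamma_q-\alpha^{\gamma_p}c$ and then refers back to the derivation of \cref{thm:improvement} for the rest, exactly as you do. Your additional remarks on not double-counting the bonus token and on the implicit single-$\alpha$ assumption are good clarifications that the paper's terse proof leaves implicit.
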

\begin{proof}
When the bonus token is generated, the proposer only needs to generate $\gamma_{q} - 1$ new tokens; otherwise, it must generate $\gamma_{q}$ tokens. The probability of generating the bonus token is the probability that all proposal tokens in the last cycle were accepted, which is $\alpha^{\gamma_p}$. Therefore, the expected time spent on proposal and verification is $\alpha^{\gamma_p} \left(1 + c (\gamma_{q} - 1)\right) + (1 - \alpha^{\gamma_p})(1 + c\gamma_{q})$. Then the factor can be derived following the process in ~\cref{apd:proof_of_improvement}.
\end{proof}

In \cref{cor:weighted_effectiveness}, we proved that in the weighted ensemble scenario, the Naive-CoS is never slower than the vanilla collaborative decoding and is typically faster. In this subsection, with the alternate proposal framework, we extend this conclusion to any form of two-model collaboration.

\begin{corollary}\label{cor:effectiveness}
    For any two models, there exist values of $\gamma_q$ and $\gamma_p$ such that the speed of the alternate proposal framework is never slower than the vanilla collaboration and is almost always faster.
\end{corollary}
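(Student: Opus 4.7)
The plan is to reduce the corollary to a direct computation using the improvement factor from \cref{thm:alternate_improvement}, by choosing the simplest non-trivial proposal lengths $\gamma_q = \gamma_p = 1$. Substituting into the formula $\frac{(1-\alpha^{\gamma_q})(1+c)}{(1-\alpha)(1 + c\gamma_q - \alpha^{\gamma_p}c)}$ and cancelling the common factor $(1-\alpha)$ collapses the expression to $\frac{1+c}{1 + c - \alpha c}$. Since $c > 0$ and $\alpha \in [0,1]$, the denominator is never larger than the numerator, so this factor is at least $1$, which gives the ``never slower'' half of the claim. Equality requires $\alpha c = 0$, and since $c > 0$ for any actual proposal model, equality forces $\alpha = 0$. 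The boundary case $\alpha = 1$ needs a brief limiting argument via $(1 - \alpha^{\gamma_q})/(1 - \alpha) \to \gamma_q$, but still yields a factor of $1 + c > 1$.

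Next I would argue that $\alpha = 0$ almost never occurs. Recalling from \cref{eqn:alpha} that $\alpha = 1 - \tfrac{1}{2} D_{\text{TV}}(q, r)$, the condition $\alpha = 0$ is equivalent to the proposer distribution $q$ and the ensemble distribution $r = \mathcal{E}(q, p)$ having disjoint supports. For any sensible ensemble function (weighted sums, contrastive combinations, or otherwise), $r$ inherits nonzero mass on at least one token shared with $q$, so this disjoint-support case is essentially impossible for real LLM distributions. This yields the ``almost always faster'' half.

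The main subtlety to verify carefully is that the single choice $\gamma_q = \gamma_p = 1$ must work without any structural assumption on $\mathcal{E}$, since the statement covers \emph{any} two-model ensemble. This is exactly where the alternate framework strictly extends \cref{cor:weighted_effectiveness}: the extra term $-\alpha^{\gamma_p}c$ in the denominator is a pure windfall from reusing bonus tokens and is non-negative regardless of $\mathcal{E}$, so the argument needs neither the weighted-ensemble lower bound $\alpha \ge \lambda$ from \cref{thm:lower_bound} nor the role-swapping dichotomy of \cref{cor:if_lambda}. Consequently there is no case analysis on which model is smaller; the bound holds uniformly in $c$ and in $\mathcal{E}$, which is the key conceptual improvement over the earlier result.
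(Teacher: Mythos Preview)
Your proposal is correct and follows essentially the same approach as the paper: set $\gamma_q=\gamma_p=1$, simplify the improvement factor from \cref{thm:alternate_improvement} to $\tfrac{1+c}{1+c-\alpha c}\ge 1$, and note that equality forces $\alpha=0$ (or $c=0$), which is practically impossible since it would require $q$ and $r$ to have disjoint supports. Your explicit treatment of the $\alpha=1$ boundary and your remarks on why no assumption on $\mathcal{E}$ is needed are welcome additions, but the core argument matches the paper's proof.
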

\begin{proof}
    Consider $\gamma_q = \gamma_p = 1$, then $\frac{(1 - \alpha^{\gamma_q}) (1 + c)}{(1 - \alpha)(1 + c \gamma_q - \alpha^{\gamma_p} c)} \geq 1$ holds universally. The equality holds only when $c = 0$ or $\alpha = 0$. However, $c > 0$ because the execution time of the proposal model is non-negligible, and $\alpha > 0$ holds unless the proposal distribution and the combined distribution do not overlap, which is almost impossible.
\end{proof}

An intuitive interpretation of \cref{cor:effectiveness} is that when $\gamma_q = \gamma_p = 1$, even in the worst-case scenario—where all proposal tokens are rejected—each token generation still requires only one proposal and one verification. This results in the same number of model invocations as the standard collaborative decoding. In practice, however, it is rare for all proposal tokens to be rejected. Once a token is accepted, the collaboration process becomes more efficient.

\subsection{Generalize to More Models}
\label{subsec:tetris}
\begin{figure}[t]
    \centering
    \includegraphics[width=0.99\linewidth]{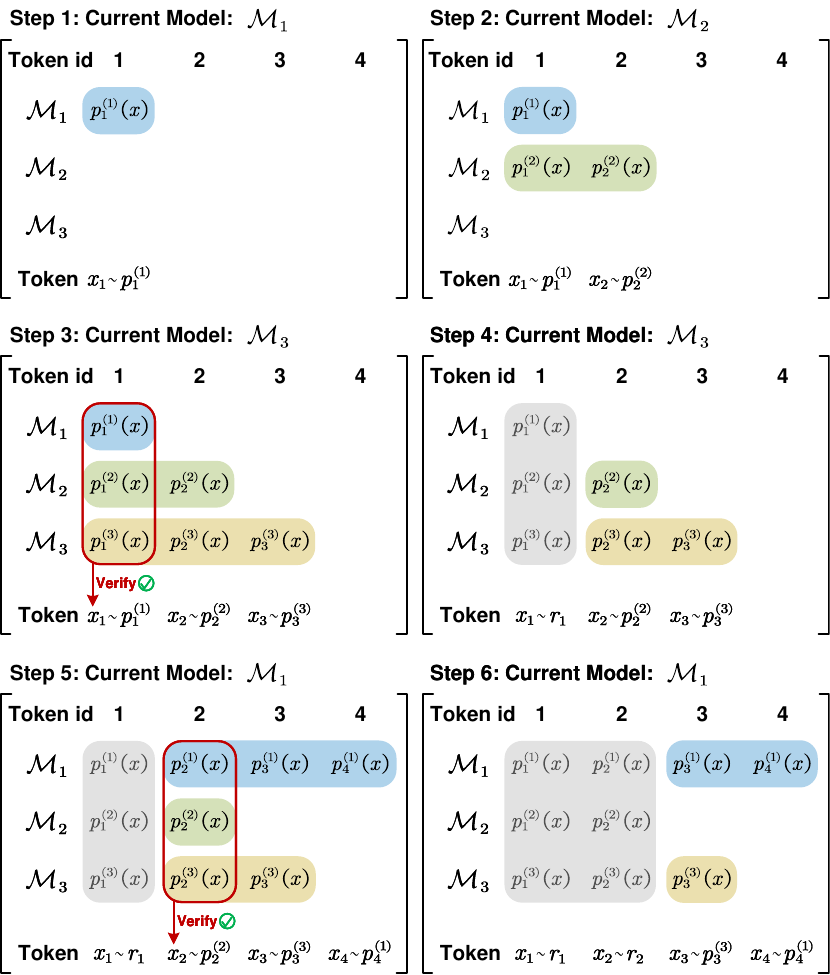}
    \caption{The sketch of CoS in three-model collaboration scenario. 
    The colored boxes represent the stored probability distributions, while the gray boxes represent the discarded ones. Each invocation involves scoring the current proposal tokens and generating a bonus token. For clarity, we assume that the proposal length for each model is 1 and that all proposed tokens are accepted.
    }
    \label{fig:tetris}
\end{figure}

In this subsection, we extend CoS to the \(n\)-model collaboration scenario. The core principles remain similar to the two-model case, with acceleration driven by two key factors. First, each model can score the proposals of other models in parallel, where scoring refers to computing the probability distribution of a proposal from other models.\footnote{We use the term ``scoring'' rather than ``verification'' because, unlike in the two-model case, scoring does not immediately trigger verification; instead, verification occurs only after all models have scored a token.} Second, during scoring, a model can naturally generate a bonus token, which further improves efficiency. We illustrate the CoS process in the \(n\)-model scenario with a simple example, while detailed pseudocode and a general visualization are provided in \cref{apd:se}.

As shown in \cref{fig:tetris}, the process begins in step 1 with the default proposal model, \(\mathcal{M}_1\), generating a proposal token \(x_1\). In step 2, \(\mathcal{M}_2\) scores \(x_1\) while simultaneously generating a bonus token \(x_2\). Similarly, in step 3, \(\mathcal{M}_3\) scores both \(x_1\) and \(x_2\) in parallel and produces another bonus token, \(x_3\). At this point, \(x_1\) has been scored by both \(\mathcal{M}_2\) and \(\mathcal{M}_3\), enabling the computation of its combined distribution $r_1(x)$ for verification. The associated distributions \(p_1^{(1)}(x)\), \(p_2^{(1)}(x)\), \(p_3^{(1)}(x)\) are no longer needed and are discarded.  

If \(x_1\) is accepted, \(\mathcal{M}_1\) computes \(p_2^{(1)}(x)\), \(p_3^{(1)}(x)\), \(p_4^{(1)}(x)\) in parallel as shown in step 5, allowing verification of \(x_2\). Otherwise, if \(x_1\) is rejected, all stored distributions are cleared, and \(\mathcal{M}_1\) generates a new proposal, similar to step 1.

\section{Experiments}
\subsection{Experimental Setups}
\textbf{Datasets and evaluation.} We test CoS across multiple tasks including code generation, mathematical reasoning, multi-task understanding, and text summarization on HumanEval \cite{chen2021codex}, GSM8K \cite{cobbe2021gsm8k}, MMLU \cite{hendryckstest2021}, and CNNDM \cite{cnndm}, respectively. We measure each method's speed by the average tokens generated per second and compute the speedup ratio relative to the standard collaborative decoding. All experiments are conducted on RTX 3090, except for evaluations involving the Llama-Vicuna model pair, which use the A6000 GPU. Additionally, we also test on the Ascend 910B3 NPU; the corresponding results are shown in \cref{tab:raw_speed_npu} and \cref{tab:speedup_npu}.

\textbf{Combination functions and methods.}
We experiment with two combination functions: weighted ensemble (WE) at the distribution level (\cref{eqn:weighted}) and contrastive decoding (CD) at the logits level (\cref{eqn:contrastive}). 
For WE, in the two-model case, we set \(\lambda = 0.5\) and temperature $T=1$; in the three-model case, each model's coefficient was set to \(1/3\). For CD, we set \(\mu = 0.1\), which is the most common setting, and set $T$ to both 0 and 1. WE with $T=0$ is not tested due to its uncommon use, as it leads to a one-hot distribution, reducing information.
Among two combination functions, four methods are compared: (1) the standard collaborative decoding (\textbf{WE}, \textbf{CD}); (1) parallel collaborative decoding (\textbf{WE-P}, \textbf{CD-P}); (2) an accelerated version with speculative decoding (SD), using the smallest model as the proposal and the combined distribution as the target (\textbf{WE-SD}, \textbf{CD-SD}); and (3) CoS (\textbf{WE-CoS}, \textbf{CD-CoS}). Since SCD is equivalent to Naive-CoS, its results are included in our ablation on alternative proposal frameworks (\cref{apd:ablation_on_apf}).

\textbf{Model pair configuration.} We experiment on different types of LLMs, including Llama-2 \cite{llama2, miao2023specinfer}, Vicuna \cite{vicuna}, Llama-3 \cite{llama3}, Qwen-2.5 \cite{qwen2.5},  and OPT \cite{zhang2022opt}. Model pair configurations for each combination function are in \cref{tab:model_pairs}.  We also test a three-model collaboration using Qwen2.5-1.5B-Instruct and its code and math versions in the WE setting.

\begin{table}[tb]
    \centering
    \renewcommand{\arraystretch}{1.2}
    \caption{Model pair configuration. The first column represents the name of the corresponding model pair for simplicity.} 
    \label{tab:model_pairs}
    \resizebox{\linewidth}{!}{
    \begin{tabular}{l|lll}
    \toprule
    Name & $\mathcal M_q$ & $\mathcal M_p$ \\
    \midrule
    \multicolumn{3}{c}{\textit{Weight Ensemble (WE)}} \\
    \midrule
    Llama-Vicuna & Llama-2-7B &  Vicuna-7B-V1.5 \\
    Qwen-3b & Qwen2.5-3B-Instruct &  Qwen2.5-Coder-3B-Instruct \\
    Qwen-1.5b & Qwen2.5-1.5B-Instruct & Qwen2.5-Coder-1.5B-Instruct \\
    \midrule
    \multicolumn{3}{c}{\textit{Contrastive Decoding (CD)}} \\
    \midrule
    Llama-3 & Llama-3.2-1B &  Llama-3.1-8B-Instruct \\
    Llama-2 & Llama-68M & Llama-2-7B \\
    OPT & OPT-125M & OPT-13B  \\
    \bottomrule
    \end{tabular}
    }
\end{table}

\textbf{Configuration of \(\gamma\).} The proposal length \(\gamma\) is the only hyperparameter in SD, affecting the algorithm's acceleration. In the two-model CoS setting, \(\gamma\) corresponds to the proposal length of the smaller model, with the larger model fixed at 1. For simplicity, we refer to the smaller model with \(\gamma > 1\) as the proposal model of CoS, since it typically serves this role. We tested \(\gamma = 5\) and \(\gamma = 1\) for CoS and SD speeds, reporting the optimal results. \(\gamma = 5\) is the common setting, while \(\gamma = 1\) ensures acceleration (\cref{cor:effectiveness}). In the three-model CoS, all models have a proposal length of 1.

\begin{figure*}[htbp]
    \centering
    \subfloat[Llama-Vicuna (HumanEval)]{
    \includegraphics[width=0.24\linewidth]{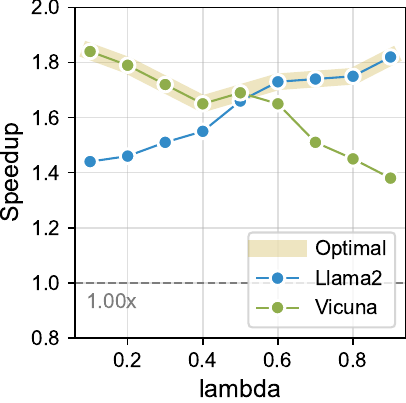}
    }
    \subfloat[Llama-Vicuna (GSM8K)]{
    \includegraphics[width=0.24\linewidth]{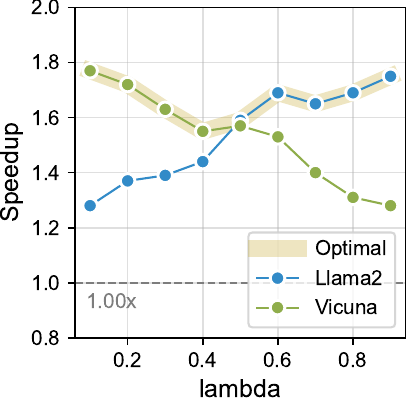}
    }
    \subfloat[Qwen-3b (HumanEval)]{
    \includegraphics[width=0.24\linewidth]{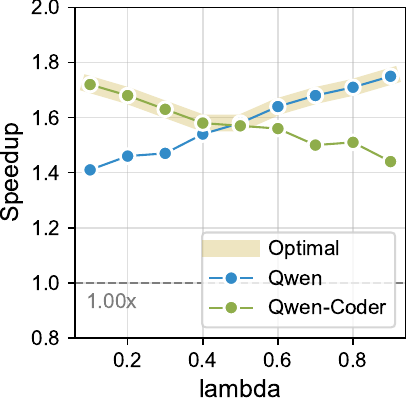}
    }
    \subfloat[Qwen-3b (GSM8K)]{
    \includegraphics[width=0.24\linewidth]{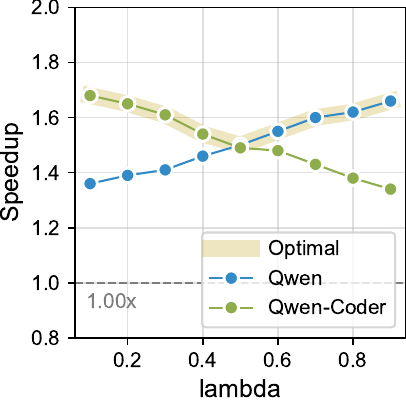}
    }
    \caption{Comparison of speedup ratios for different $\lambda$ in WE across diverse setings. The blue and green lines represent the speedup ratios when the corresponding models serve as the proposal model, while the shaded region highlights the maximum speedup between the two.}
    \label{fig:lambda_abl}
\end{figure*}

\begin{figure}[tbp]
    \centering
    \includegraphics[width=0.9\linewidth]{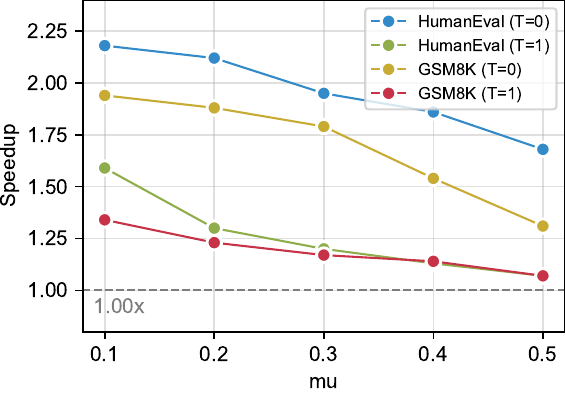}
    \caption{Comparison of speedup ratios for different $\mu$ in CD across different temperatures and datasets.}
    \label{fig:mu_abl}
\end{figure}

\subsection{Main Results}

\cref{tab:we_main} and \cref{tab:cd_main} display the speedup ratios for each method relative to the standard collaborative decoding in the WE and CD settings, respectively.\footnote{The results of OPT model pair are shown in \cref{apd:opt}.} From these two tables, we have the following findings. First, CoS not only consistently achieves the highest speedup in all settings, it also gets speedup across all settings, which supports the findings in \cref{cor:effectiveness}. In contrast, SD may reduce the collaboration speed in some cases. For example, when using the Llama-2 model pair with $T=1$ on HumanEval in \cref{tab:cd_main}, applying SD reduces the speed to 0.94x of the standard collaboration. A similar speed reduction was also observed in the three-model scenario in \cref{tab:we_main}. This is because vanilla SD does not inherently ensure acceleration. When the acceptance rate is low, SD may perform slower than standard decoding.

Second, compared to the CD scenario, the WE scenario ensures a higher minimum speedup for CoS. In the two-model case, CoS achieves a minimum speedup of 1.34x, while in the three-model case, it reaches at least 1.27x. In contrast, the CD scenario has a speedup as low as 1.11x. This difference arises because CoS maintains a consistently high acceptance rate in the WE scenario, as outlined in \cref{thm:we_lb}.

Third, the speedup varies across tasks and is influenced by the determinism of task outputs. For example, in the WE scenario, CoS achieved the highest speedup on HumanEval, averaging 1.65x, as code generation demands strictly formatted outputs. Conversely, CoS has a lower speedup of 1.36x on a text summarization task, where output flexibility is higher. This difference stems from the alignment between the proposal and target models: in highly deterministic tasks, their outputs exhibit greater similarity, leading to a higher acceptance rate and, consequently, stronger acceleration.

\begin{table}[ht]
    \renewcommand{\arraystretch}{1.3}
    \centering
    \caption{The speedup ratio of each method in WE setting. The method with the optimal speedup is highlighted in \textbf{bold}.}
    \label{tab:we_main}
    \resizebox{\linewidth}{!}{
    \setlength{\tabcolsep}{1mm}{
        \begin{tabular}{c|ccccc}
            \toprule
            & Method & HumanEval & GSM8K & MMLU & CNNDM \\
            \midrule
            \multirow{4}{*}{\rotatebox{90}{\makecell{Llama \\ Vicuna}}} & WE    & 1.00x & 1.00x & 1.00x & 1.00x\\
            & WE-P &  0.69x& 0.73x& 0.70x& 0.75x\\
            & SD & 1.27x & 1.21x & 1.19x & 1.15x \\
            & CoS & \textbf{1.58x} & \textbf{1.52x} & \textbf{1.41x} & \textbf{1.46x} \\
            \midrule
            \multirow{4}{*}{\rotatebox{90}{\makecell{Qwen-3b}}} & WE    & 1.00x & 1.00x & 1.00x & 1.00x\\
            & WE-P &  0.74x& 0.79x& 0.79x& 0.77\\
            & SD & 1.13x & 1.06x & 1.09x & 1.08x \\
            & CoS & \textbf{1.62x} & \textbf{1.52x} & \textbf{1.42x} & \textbf{1.38x} \\
            \midrule
            \multirow{4}{*}{\rotatebox{90}{\makecell{Qwen-1.5b}}} & WE    & 1.00x & 1.00x & 1.00x & 1.00x\\
            & WE-P &  0.63x& 0.62x& 0.64x& 0.63x\\
            & SD & 1.11x & 1.13x & 1.08x & 1.10x \\
            & CoS & \textbf{1.56x} & \textbf{1.46x} & \textbf{1.34x} & \textbf{1.35x} \\
            \midrule
            \multirow{4}{*}{\rotatebox{90}{\makecell{Qwen-1.5b \\ (3 Model)}}} & WE    & 1.00x & 1.00x & 1.00x & 1.00x\\
            & WE-P &  0.54x& 0.73x& 0.80x& 0.82x\\
            & SD & 0.96x & 0.92x & 0.98x & 0.95x \\
            & CoS & \textbf{1.85x} & \textbf{1.53x} & \textbf{1.38x} & \textbf{1.27x} \\
            \bottomrule
        \end{tabular}
    }
    }
\end{table}

\begin{table}[ht]
    \centering
    \caption{The speedup ratio of each method in CD setting.}
    \label{tab:cd_main}
    \resizebox{\linewidth}{!}{
    \setlength{\tabcolsep}{1mm}{
        \begin{tabular}{c|cccccc}
            \toprule
            & T & Method & HumanEval & GSM8K & MMLU & CNNDM \\
            \midrule
            \multirow{8}{*}{\rotatebox{90}{Llama-3}} & \multirow{4}{*}{0} & CD & 1.00x & 1.00x & 1.00x & 1.00x\\
            & & CD-P &  0.41x&  0.40x&  0.41x&  0.41x\\
            & & SD & 2.04x & 1.81x & 1.52x & 1.58x \\
            & & CoS  & \textbf{2.23x} & \textbf{2.00x} & \textbf{1.77x} & \textbf{1.61x} \\
            \cmidrule{2-7}
            & \multirow{4}{*}{1} & CD & 1.00x & 1.00x & 1.00x & 1.00x\\
            & & CD-P &  0.39x&  0.41x&  0.42x&  0.41x\\
            & & SD & 1.55x & 1.21x & 1.20x & 1.07x \\
            & & CoS  & \textbf{1.65x} & \textbf{1.44x} & \textbf{1.31x} & \textbf{1.18x} \\
            
            \midrule
            \multirow{8}{*}{\rotatebox{90}{Llama-2}} & \multirow{4}{*}{0} & CD & 1.00x & 1.00x & 1.00x & 1.00x\\
            & & CD-P &  0.59x&  0.50x&  0.54x&  0.48x\\
            & & SD & 1.15x & 1.62x & 1.08x & 0.93x \\
            & & CoS & \textbf{1.26x} & \textbf{1.65x} & \textbf{1.68x} & \textbf{1.30x} \\
            \cmidrule{2-7}
            & \multirow{4}{*}{1} & CD & 1.00x & 1.00x & 1.00x & 1.00x\\
            & & CD-P &  0.56x&  0.51x&  0.53x&  0.49x\\
            & & SD & 0.94x & 1.16x & 1.23x & 1.10x \\
            & & CoS & \textbf{1.15x} & \textbf{1.20x} & \textbf{1.37x} & \textbf{1.11x} \\
            \bottomrule
        \end{tabular}
    }
    }
\end{table}

\begin{figure*}[htbp]
    \centering
    \subfloat[WE ($T=1,\ \lambda = 0.5$)]{
    \includegraphics[width=0.32\linewidth]{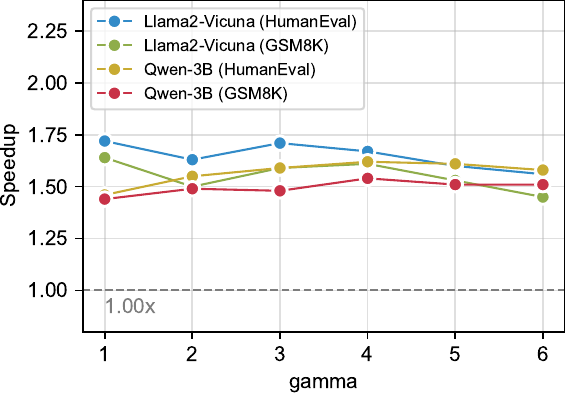}
    }
    \subfloat[CD ($T=0,\ \mu= 0.1$)]{
    \includegraphics[width=0.32\linewidth]{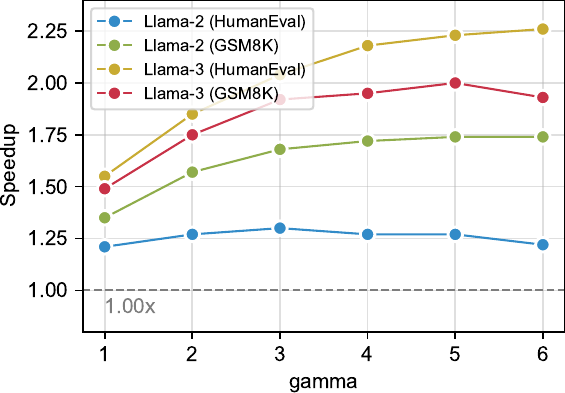}
    }
    \subfloat[WE ($T=1,\ \mu = 0.1$)]{
    \includegraphics[width=0.32\linewidth]{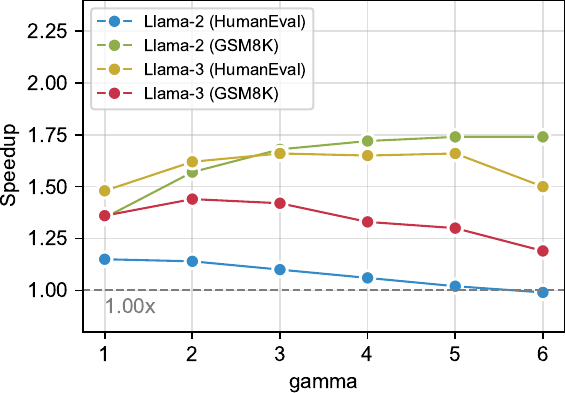}
    }
    \caption{Comparison of speedup ratios for different $\gamma$ across different settings.}
    \label{fig:gamma_abl}
\end{figure*}

\subsection{Analysis}

\textbf{Impact of proposal length $\gamma$.} We evaluate the influence of various $\gamma$ values, ranging from 1 to 5, on speedup across both the WE and CD scenarios, utilizing the HumanEval and GSM8K datasets. The results in \cref{fig:gamma_abl} show that when the models are similar in size, the speedup ratio remains stable across $\gamma$ values, as seen in the WE scenario \cref{fig:gamma_abl}~(a). This is because the high cost of invoking the proposal model offsets the speedup from increasing \(\gamma\). However, when the models differ significantly in size, the speedup ratio varies considerably with $\gamma$, as shown in the CD scenario \cref{fig:gamma_abl}~(b) and (c).

Additionally, we observe that speedup initially increases with \(\gamma\) before decreasing. For example, in the experiment with the Llama-3 model pair on GSM8K (\cref{fig:gamma_abl}~(b)), speedup improves as \(\gamma\) rises from 1 to 5, peaks at \(\gamma = 5\), and then declines. This behavior is explained by two factors: increasing \(\gamma\) boosts the expected number of accepted tokens, which improves acceleration; while later proposal tokens depend on earlier, unverified tokens, making them less accurate and more likely to be rejected, which wastes computation. Thus, the optimal speedup is achieved at a specific \(\gamma\). In some cases, however, speedup either monotonically increases or decreases due to high or low acceptance rates. For instance, this is observed in the experiment experiments with the Llama-3 pair on HumanEval (\cref{fig:gamma_abl}~(b)) and the Llama-2 pair on HumanEval (\cref{fig:gamma_abl}~(c)).

\textbf{Speedup ratio for different weight $\lambda$ in WE.} We examine the speedup effect of CoS when $\lambda$ takes values other than just 0.5. Specifically, we conduct experiments with \(\lambda\) values ranging from 0.1 to 0.9, using the Llama-Vicuna and Qwen-3b model pairs on the HumanEval and GSM8K datasets. The results, presented in \cref{fig:lambda_abl}, show that CoS consistently achieves a high speedup of at least 1.5x across all tested $\lambda$ values. This consistent speedup occurs because, when the two models are of similar sizes, for any $\lambda$, an appropriate proposal model can be selected to maintain a high acceptance rate during CoS process (as explained in \cref{thm:we_lb}), ensuring the observed speedup.


\textbf{Speedup ratio for different weight values of $\mu$ in CD.} Similarly, we examine the speedup effect of CD when $\mu$ takes other values. Specifically, we conduct experiments with $\mu$ ranging from 0.1 to 0.5, using the Llama-3 model pair on the HumanEval and GSM8K datasets. The results, presented in \cref{fig:mu_abl}, show that CoS consistently accelerates the CD process across all tested $\mu$.

Furthermore, we find that an increase in $\mu$ results in a reduced speedup. This occurs because CD computes the combined distribution by subtracting the proposal model's information from the target model's. As $\mu$ grows, the gap between these distributions widens, lowering the acceptance rate (\cref{eqn:alpha}). Despite this, the speedup ratio remains above 1.00x, confirming that CD-CoS always accelerates, consistent with \cref{cor:effectiveness}.

\section{Conclusion}

This paper introduces Collaborative Decoding via Speculation (CoS), an extension of speculative decoding that accelerates LLM collaborative decoding while maintaining output quality. CoS refines the verification mechanism for direct collaborative sampling and introduces an alternate proposal framework to further boost efficiency. We demonstrate the effectiveness of CoS through both theoretical analysis and empirical validation.

\section*{Acknowledgements}

This work is supported by the National Science Foundation of China (62206048), the Natural Science Foundation of Jiangsu Province (BK20220819), and the Fundamental Research Funds for the Central Universities (2242024k30035). Additional support was provided by the Big Data Computing Center of Southeast University and the SEU \& Ascend Center of Cultivation.

\section*{Impact Statement}

This paper presents work whose goal is to advance the field of 
Deep Learning. There are many potential societal consequences 
of our work, none which we feel must be specifically highlighted here.

\bibliography{example_paper}

\begin{thebibliography}{41}
\providecommand{\natexlab}[1]{#1}
\providecommand{\url}[1]{\texttt{#1}}
\expandafter\ifx\csname urlstyle\endcsname\relax
  \providecommand{\doi}[1]{doi: #1}\else
  \providecommand{\doi}{doi: \begingroup \urlstyle{rm}\Url}\fi

\bibitem[Anonymous(2025{\natexlab{a}})]{anonymous24a}
Anonymous.
\newblock Optimized multi-token joint decoding with auxiliary model for llm inference.
\newblock In \emph{The Thirteenth International Conference on Learning Representations (ICLR 2025)}, 2025{\natexlab{a}}.

\bibitem[Anonymous(2025{\natexlab{b}})]{anonymous24b}
Anonymous.
\newblock Judge decoding: Faster speculative sampling requires going beyond model alignment.
\newblock In \emph{The Thirteenth International Conference on Learning Representations (ICLR 2025)}, 2025{\natexlab{b}}.

\bibitem[Cai et~al.(2024)Cai, Li, Geng, Peng, Lee, Chen, and Dao]{cai24}
Cai, T., Li, Y., Geng, Z., Peng, H., Lee, J.~D., Chen, D., and Dao, T.
\newblock Medusa: Simple llm inference acceleration framework with multiple decoding heads.
\newblock In \emph{Proceedings of the 41st International Conference on Machine Learning (ICML 2024)}, Vienna, Austria, 2024.

\bibitem[Chen et~al.(2023)Chen, Borgeaud, Irving, Lespiau, Sifre, and Jumper]{chen23}
Chen, C., Borgeaud, S., Irving, G., Lespiau, J.-B., Sifre, L., and Jumper, J.
\newblock Accelerating large language model decoding with speculative sampling.
\newblock \emph{arXiv preprint arXiv:2302.01318}, 2023.

\bibitem[Chen et~al.(2021)Chen, Tworek, Jun, Yuan, de~Oliveira~Pinto, Kaplan, Edwards, Burda, Joseph, Brockman, Ray, Puri, Krueger, Petrov, Khlaaf, Sastry, Mishkin, Chan, Gray, Ryder, Pavlov, Power, Kaiser, Bavarian, Winter, Tillet, Such, Cummings, Plappert, Chantzis, Barnes, Herbert-Voss, Guss, Nichol, Paino, Tezak, Tang, Babuschkin, Balaji, Jain, Saunders, Hesse, Carr, Leike, Achiam, Misra, Morikawa, Radford, Knight, Brundage, Murati, Mayer, Welinder, McGrew, Amodei, McCandlish, Sutskever, and Zaremba]{chen2021codex}
Chen, M., Tworek, J., Jun, H., Yuan, Q., de~Oliveira~Pinto, H.~P., Kaplan, J., Edwards, H., Burda, Y., Joseph, N., Brockman, G., Ray, A., Puri, R., Krueger, G., Petrov, M., Khlaaf, H., Sastry, G., Mishkin, P., Chan, B., Gray, S., Ryder, N., Pavlov, M., Power, A., Kaiser, L., Bavarian, M., Winter, C., Tillet, P., Such, F.~P., Cummings, D., Plappert, M., Chantzis, F., Barnes, E., Herbert-Voss, A., Guss, W.~H., Nichol, A., Paino, A., Tezak, N., Tang, J., Babuschkin, I., Balaji, S., Jain, S., Saunders, W., Hesse, C., Carr, A.~N., Leike, J., Achiam, J., Misra, V., Morikawa, E., Radford, A., Knight, M., Brundage, M., Murati, M., Mayer, K., Welinder, P., McGrew, B., Amodei, D., McCandlish, S., Sutskever, I., and Zaremba, W.
\newblock Evaluating large language models trained on code.
\newblock 2021.

\bibitem[Chen et~al.(2025)Chen, Li, Chen, Li, Sun, Luo, Mao, Yang, Sun, and Yu]{chen2025harnessing}
Chen, Z., Li, J., Chen, P., Li, Z., Sun, K., Luo, Y., Mao, Q., Yang, D., Sun, H., and Yu, P.~S.
\newblock Harnessing multiple large language models: A survey on llm ensemble.
\newblock \emph{arXiv preprint arXiv:2502.18036}, 2025.

\bibitem[Cobbe et~al.(2021)Cobbe, Kosaraju, Bavarian, Chen, Jun, Kaiser, Plappert, Tworek, Hilton, Nakano, Hesse, and Schulman]{cobbe2021gsm8k}
Cobbe, K., Kosaraju, V., Bavarian, M., Chen, M., Jun, H., Kaiser, L., Plappert, M., Tworek, J., Hilton, J., Nakano, R., Hesse, C., and Schulman, J.
\newblock Training verifiers to solve math word problems.
\newblock \emph{arXiv preprint arXiv:2110.14168}, 2021.

\bibitem[Dubey et~al.(2024)Dubey, Jauhri, Pandey, Kadian, Al-Dahle, Letman, Mathur, Schelten, Yang, Fan, et~al.]{llama3}
Dubey, A., Jauhri, A., Pandey, A., Kadian, A., Al-Dahle, A., Letman, A., Mathur, A., Schelten, A., Yang, A., Fan, A., et~al.
\newblock The llama 3 herd of models.
\newblock \emph{arXiv preprint arXiv:2407.21783}, 2024.

\bibitem[Elhoushi et~al.(2024)Elhoushi, Shrivastava, Liskovich, Hosmer, Wasti, Lai, Mahmoud, Acun, Agarwal, Roman, Aly, Chen, and Wu]{elhoushi2024}
Elhoushi, M., Shrivastava, A., Liskovich, D., Hosmer, B., Wasti, B., Lai, L., Mahmoud, A., Acun, B., Agarwal, S., Roman, A., Aly, A., Chen, B., and Wu, C.-J.
\newblock Layerskip: Enabling early exit inference and self-speculative decoding.
\newblock In \emph{Proceedings of the 62nd Annual Meeting of the Association for Computational Linguistics (ACL 2024)}, Bangkok, Thailand, 2024.

\bibitem[Gong et~al.(2024)Gong, Liu, Wang, Wu, Wang, Cai, Zhao, and Yan]{gong24}
Gong, Z., Liu, J., Wang, Z., Wu, P., Wang, J., Cai, X., Zhao, D., and Yan, R.
\newblock Graph-structured speculative decoding.
\newblock In \emph{Findings of the Association for Computational Linguistics (ACL 2024)}, Bangkok, Thailand, 2024.

\bibitem[Han et~al.(2024)Han, Liu, Li, Xiong, and Cohan]{han2024hybridmind}
Han, S., Liu, T., Li, C., Xiong, X., and Cohan, A.
\newblock Hybridmind: Meta selection of natural language and symbolic language for enhanced llm reasoning.
\newblock \emph{arXiv e-prints}, pp.\  arXiv--2409, 2024.

\bibitem[Han et~al.(2025)Han, Gomez, Vu, Li, Cer, Zeng, Tar, Cohan, and Abrego]{han2025ateb}
Han, S., Gomez, F.~P., Vu, T., Li, Z., Cer, D., Zeng, H., Tar, C., Cohan, A., and Abrego, G.~H.
\newblock Ateb: Evaluating and improving advanced nlp tasks for text embedding models.
\newblock \emph{arXiv preprint arXiv:2502.16766}, 2025.

\bibitem[Hendrycks et~al.(2021)Hendrycks, Burns, Basart, Zou, Mazeika, Song, and Steinhardt]{hendryckstest2021}
Hendrycks, D., Burns, C., Basart, S., Zou, A., Mazeika, M., Song, D., and Steinhardt, J.
\newblock Measuring massive multitask language understanding.
\newblock \emph{Proceedings of the International Conference on Learning Representations (ICLR)}, 2021.

\bibitem[Hu \& Huang(2024)Hu and Huang]{hu24}
Hu, Z. and Huang, H.
\newblock Accelerated speculative sampling based on tree monte carlo.
\newblock In \emph{Forty-first International Conference on Machine Learning (ICML 2024)}, 2024.
\newblock URL \url{https://openreview.net/forum?id=stMhi1Sn2G}.

\bibitem[Huang et~al.(2024)Huang, Feng, Li, Xiang, Wang, Liu, and Qin]{huang2024ensemble}
Huang, Y., Feng, X., Li, B., Xiang, Y., Wang, H., Liu, T., and Qin, B.
\newblock Ensemble learning for heterogeneous large language models with deep parallel collaboration.
\newblock \emph{Advances in Neural Information Processing Systems}, 37:\penalty0 119838--119860, 2024.

\bibitem[Leviathan et~al.(2023)Leviathan, Kalman, and Matias]{leviathan23}
Leviathan, Y., Kalman, M., and Matias, Y.
\newblock Fast inference from transformers via speculative decoding.
\newblock In \emph{Proceedings of the 40th International Conference on Machine Learning (ICML 2023)}, Honolulu, Hawaii, USA, 2023.

\bibitem[Li et~al.(2024{\natexlab{a}})Li, Liu, Pang, Du, Guo, Liu, and Lin]{li2024purifying}
Li, T., Liu, Q., Pang, T., Du, C., Guo, Q., Liu, Y., and Lin, M.
\newblock Purifying large language models by ensembling a small language model.
\newblock \emph{arXiv preprint arXiv:2402.14845}, 2024{\natexlab{a}}.

\bibitem[Li et~al.(2023)Li, Holtzman, Fried, Liang, Eisner, Hashimoto, Zettlemoyer, and Lewis]{li2023contrastive}
Li, X.~L., Holtzman, A., Fried, D., Liang, P., Eisner, J., Hashimoto, T.~B., Zettlemoyer, L., and Lewis, M.
\newblock Contrastive decoding: Open-ended text generation as optimization.
\newblock In \emph{Proceedings of the 61st Annual Meeting of the Association for Computational Linguistics (Volume 1: Long Papers)}, pp.\  12286--12312, 2023.

\bibitem[Li et~al.(2024{\natexlab{b}})Li, Wei, Zhang, and Zhang]{li24a}
Li, Y., Wei, F., Zhang, C., and Zhang, H.
\newblock Eagle: Speculative sampling requires rethinking feature uncertainty.
\newblock In \emph{Proceedings of the 41st International Conference on Machine Learning (ICML 2024)}, Vienna, Austria, 2024{\natexlab{b}}.

\bibitem[Li et~al.(2024{\natexlab{c}})Li, Wei, Zhang, and Zhang]{li24b}
Li, Y., Wei, F., Zhang, C., and Zhang, H.
\newblock Eagle-2: Faster inference of language models with dynamic draft trees.
\newblock In \emph{Proceedings of the 2024 Conference on Empirical Methods in Natural Language Processing (EMNLP 2024)}, Miami, Florida, USA, 2024{\natexlab{c}}.

\bibitem[Liu et~al.(2024)Liu, Guo, Bianco, Calandriello, Berthet, Llinares, Hoffmann, Dixon, Valko, and Blondel]{Liu2024decoding}
Liu, T., Guo, S., Bianco, L., Calandriello, D., Berthet, Q., Llinares, F., Hoffmann, J., Dixon, L., Valko, M., and Blondel, M.
\newblock Decoding-time realignment of language models.
\newblock In \emph{Proceedings of the International Conference on Machine Learning}, 2024.

\bibitem[Lu et~al.(2024)Lu, Pang, Xiao, Zhu, Xia, and Zhang]{lu2024merge}
Lu, J., Pang, Z., Xiao, M., Zhu, Y., Xia, R., and Zhang, J.
\newblock Merge, ensemble, and cooperate! a survey on collaborative strategies in the era of large language models.
\newblock \emph{arXiv preprint arXiv:2407.06089}, 2024.

\bibitem[Miao et~al.(2024)Miao, Oliaro, Zhang, Cheng, Wang, Zhang, Wong, Zhu, Yang, Shi, Shi, Chen, Arfeen, Abhyankar, and Jia]{miao2023specinfer}
Miao, X., Oliaro, G., Zhang, Z., Cheng, X., Wang, Z., Zhang, Z., Wong, R. Y.~Y., Zhu, A., Yang, L., Shi, X., Shi, C., Chen, Z., Arfeen, D., Abhyankar, R., and Jia, Z.
\newblock Specinfer: Accelerating large language model serving with tree-based speculative inference and verification.
\newblock In \emph{Proceedings of the 29th ACM International Conference on Architectural Support for Programming Languages and Operating Systems (ASPLOS 2024)}, 2024.

\bibitem[Monea et~al.(2023)Monea, Joulin, and Grave]{monea23}
Monea, G., Joulin, A., and Grave, E.
\newblock Pass: Parallel speculative sampling.
\newblock \emph{arXiv preprint arXiv:2302.01318}, 2023.
\newblock URL \url{https://arxiv.org/abs/2311.13581}.

\bibitem[O'Brien \& Lewis(2023)O'Brien and Lewis]{o2023contrastive}
O'Brien, S. and Lewis, M.
\newblock Contrastive decoding improves reasoning in large language models.
\newblock \emph{arXiv preprint arXiv:2309.09117}, 2023.

\bibitem[See et~al.(2017)See, Liu, and Manning]{cnndm}
See, A., Liu, P.~J., and Manning, C.~D.
\newblock Get to the point: Summarization with pointer-generator networks.
\newblock In \emph{Proceedings of the 55th Annual Meeting of the Association for Computational Linguistics (Volume 1: Long Papers)}, pp.\  1073--1083, Vancouver, Canada, July 2017. Association for Computational Linguistics.
\newblock \doi{10.18653/v1/P17-1099}.
\newblock URL \url{https://www.aclweb.org/anthology/P17-1099}.

\bibitem[Shi et~al.(2024)Shi, Chen, Hu, Liu, Hajishirzi, Smith, and Du]{shi2024decoding}
Shi, R., Chen, Y., Hu, Y., Liu, A., Hajishirzi, H., Smith, N.~A., and Du, S.~S.
\newblock Decoding-time language model alignment with multiple objectives.
\newblock \emph{Advances in Neural Information Processing Systems}, 37:\penalty0 48875--48920, 2024.

\bibitem[Sun et~al.(2024{\natexlab{a}})Sun, Chang, Bao, Zheng, Zheng, Liu, Dong, Chi, and Chen]{sun2024shadowkv}
Sun, H., Chang, L.-W., Bao, W., Zheng, S., Zheng, N., Liu, X., Dong, H., Chi, Y., and Chen, B.
\newblock Shadowkv: Kv cache in shadows for high-throughput long-context llm inference.
\newblock \emph{arXiv preprint arXiv:2410.21465}, 2024{\natexlab{a}}.

\bibitem[Sun et~al.(2024{\natexlab{b}})Sun, Chen, Yang, Tian, and Chen]{sun2024triforce}
Sun, H., Chen, Z., Yang, X., Tian, Y., and Chen, B.
\newblock Triforce: Lossless acceleration of long sequence generation with hierarchical speculative decoding.
\newblock In \emph{First Conference on Language Modeling}, 2024{\natexlab{b}}.

\bibitem[Team(2024)]{qwen2.5}
Team, Q.
\newblock Qwen2.5: A party of foundation models, September 2024.
\newblock URL \url{https://qwenlm.github.io/blog/qwen2.5/}.

\bibitem[Touvron et~al.(2023)Touvron, Martin, Stone, Albert, Almahairi, Babaei, Bashlykov, Batra, Bhargava, Bhosale, et~al.]{llama2}
Touvron, H., Martin, L., Stone, K., Albert, P., Almahairi, A., Babaei, Y., Bashlykov, N., Batra, S., Bhargava, P., Bhosale, S., et~al.
\newblock Llama 2: Open foundation and fine-tuned chat models.
\newblock \emph{arXiv preprint arXiv:2307.09288}, 2023.

\bibitem[Wang et~al.(2024)Wang, Chen, Zhang, Tian, Xu, Deng, and Chen]{wang2024mllm}
Wang, C., Chen, X., Zhang, N., Tian, B., Xu, H., Deng, S., and Chen, H.
\newblock Mllm can see? dynamic correction decoding for hallucination mitigation.
\newblock \emph{arXiv preprint arXiv:2410.11779}, 2024.

\bibitem[Xia et~al.(2023)Xia, Ge, Wang, Chen, Wei, and Sui]{xia2023speculative}
Xia, H., Ge, T., Wang, P., Chen, S.-Q., Wei, F., and Sui, Z.
\newblock Speculative decoding: Exploiting speculative execution for accelerating seq2seq generation.
\newblock In \emph{Findings of the Association for Computational Linguistics: EMNLP 2023}, pp.\  3909--3925, 2023.

\bibitem[Yao et~al.(2024)Yao, Wu, Liu, Luo, Han, Liu, Guo, and Song]{yao2024determine}
Yao, Y., Wu, H., Liu, M., Luo, S., Han, X., Liu, J., Guo, Z., and Song, L.
\newblock Determine-then-ensemble: Necessity of top-k union for large language model ensembling.
\newblock \emph{arXiv preprint arXiv:2410.03777}, 2024.

\bibitem[Yi et~al.(2024)Yi, Lin, Li, Ning, Yu, and Xiao]{yi2024}
Yi, H., Lin, F., Li, H., Ning, P., Yu, X., and Xiao, R.
\newblock Generation meets verification: Accelerating large language model inference with smart parallel auto-correct decoding.
\newblock \emph{arXiv preprint arXiv:2402.11809}, 2024.
\newblock URL \url{https://arxiv.org/abs/2402.11809}.

\bibitem[Yu et~al.(2024)Yu, Kuo, Ziqi, Yucheng, and Li]{yu2024breaking}
Yu, Y.-C., Kuo, C.~C., Ziqi, Y., Yucheng, C., and Li, Y.-S.
\newblock Breaking the ceiling of the {LLM} community by treating token generation as a classification for ensembling.
\newblock In Al-Onaizan, Y., Bansal, M., and Chen, Y.-N. (eds.), \emph{Findings of the Association for Computational Linguistics: EMNLP 2024}, pp.\  1826--1839, Miami, Florida, USA, November 2024. Association for Computational Linguistics.
\newblock \doi{10.18653/v1/2024.findings-emnlp.99}.
\newblock URL \url{https://aclanthology.org/2024.findings-emnlp.99/}.

\bibitem[Yuan et~al.(2024)Yuan, Lu, Huang, Yuan, and Zhou]{scd}
Yuan, H., Lu, K., Huang, F., Yuan, Z., and Zhou, C.
\newblock Speculative contrastive decoding.
\newblock In Ku, L.-W., Martins, A., and Srikumar, V. (eds.), \emph{Proceedings of the 62nd Annual Meeting of the Association for Computational Linguistics (Volume 2: Short Papers)}, pp.\  56--64, Bangkok, Thailand, August 2024. Association for Computational Linguistics.
\newblock \doi{10.18653/v1/2024.acl-short.5}.
\newblock URL \url{https://aclanthology.org/2024.acl-short.5/}.

\bibitem[Zhang et~al.(2024)Zhang, Wang, Li, Shou, Chen, Chen, and Mehrotra]{zhang2024}
Zhang, J., Wang, J., Li, H., Shou, L., Chen, K., Chen, G., and Mehrotra, S.
\newblock Draft\&verify: Lossless large language model acceleration via self-speculative decoding.
\newblock In \emph{Proceedings of the 62nd Annual Meeting of the Association for Computational Linguistics (ACL 2024)}, Bangkok, Thailand, 2024.

\bibitem[Zhang et~al.(2022)Zhang, Roller, Goyal, Artetxe, Chen, Chen, Dewan, Diab, Li, Lin, Mihaylov, Ott, Shleifer, Shuster, Simig, Koura, Sridhar, Wang, and Zettlemoyer]{zhang2022opt}
Zhang, S., Roller, S., Goyal, N., Artetxe, M., Chen, M., Chen, S., Dewan, C., Diab, M., Li, X., Lin, X.~V., Mihaylov, T., Ott, M., Shleifer, S., Shuster, K., Simig, D., Koura, P.~S., Sridhar, A., Wang, T., and Zettlemoyer, L.
\newblock Opt: Open pre-trained transformer language models, 2022.

\bibitem[Zheng et~al.(2023)Zheng, Chiang, Sheng, Zhuang, Wu, Zhuang, Lin, Li, Li, Xing, Zhang, Gonzalez, and Stoica]{vicuna}
Zheng, L., Chiang, W.-L., Sheng, Y., Zhuang, S., Wu, Z., Zhuang, Y., Lin, Z., Li, Z., Li, D., Xing, E.~P., Zhang, H., Gonzalez, J.~E., and Stoica, I.
\newblock Judging llm-as-a-judge with mt-bench and chatbot arena, 2023.

\bibitem[Zhou et~al.(2024)Zhou, Lyu, Rawat, Menon, Rostamizadeh, Kumar, Kagy, and Agarwal]{zhou24}
Zhou, Y., Lyu, K., Rawat, A.~S., Menon, A., Rostamizadeh, A., Kumar, S., Kagy, J.-F., and Agarwal, R.
\newblock Distillspec: Improving speculative decoding via knowledge distillation.
\newblock In \emph{The Thirteenth International Conference on Learning Representations (ICLR 2024)}, 2024.
\newblock URL \url{https://openreview.net/forum?id=rsY6J3ZaTF}.

\end{thebibliography}
\bibliographystyle{icml2025}

\clearpage

\appendix
\onecolumn
\section{Mathematical Proofs}
\subsection{Correctness of CoS}\label{apd:correctness}

Assume that in speculative decoding, the proposal distribution is $q(x)$, the target distribution is $p(x)$, and the combined distribution is $r(x) = \mathcal{C}(q(x), p(x))$. Referring to the proof of speculative decoding correctness~\cite{leviathan23}, we now prove that the sampling method described in ~\cref{subsec:sd_for_ens} ensures that the generated tokens align with the combined distribution $r(x)$.

First, we have:
\begin{equation}\label{eqn:correctness}
    P(x = x') = P(\text{proposal } x' \text{ accepted}) \cdot P(\text{proposal} = x') + P(\text{proposal rejected}) \cdot P(\text{resampled} = x')
\end{equation}
By definition, $P(\text{proposal } x' \text{ accepted}) = \min\left(1, \frac{r(x')}{q(x')}\right)$ and $P(\text{proposal} = x') = q(x')$.
\begin{equation}
\begin{aligned}
    P(\text{proposal rejected}) &= \sum_{x\in \mathcal V} q(x) \left[1 - \min\left(1, \frac{r(x)}{q(x)}\right)\right] \\
    & = \sum_{x\in \mathcal V} \left[q(x) - \min\left(q(x), r(x)\right)\right] \\
    & = \sum_{x\in \mathcal V} \max\left(r(x) - q(x), 0\right) \\
\end{aligned}
\end{equation}
and by definition, $\displaystyle{P(\text{resampled} = x') = \frac{\max(r(x') - q(x'), 0)}{\sum_{x\in \mathcal V} \max\left(r(x) - q(x), 0\right)}}$.
Substituting these into Equation~(\ref{eqn:correctness}), we get:
\begin{equation}
\begin{aligned}
    P(x = x') &= P(\text{proposal } x' \text{ accepted}) \cdot P(\text{proposal} = x') + P(\text{proposal rejected}) \cdot P(\text{resampled} = x')\\
    &= \min\left(1, \frac{r(x')}{q(x')}\right)\cdot q(x') + \sum_{x\in \mathcal V} \max\left(r(x) - q(x), 0\right) \cdot \frac{\max(r(x') - q(x'), 0)}{\sum_{x\in \mathcal V} \max\left(r(x) - q(x), 0\right)}\\
    &= \min(q(x'), r(x')) + \max(r(x') - q(x'), 0)\\
    &= r(x').\\
\end{aligned}
\end{equation}
The acceptance rate $P(\text{proposal accepted})$ is computed as:
\begin{equation}
\begin{aligned}
    P(\text{proposal accepted}) &= 1 - P(\text{proposal rejected})\\
    &= 1 - \sum_{x\in \mathcal V} \max\left(r(x) - q(x), 0\right)\\
    &= 1 - \frac{1}{2}\sum_{x\in \mathcal V}\left|r(x) - q(x)\right|\\
    &= 1 - \frac{1}{2}D_{\text{TV}}(r, q).
\end{aligned}
\end{equation}

\subsection{Proof of ~\cref{thm:improvement}}\label{apd:proof_of_improvement}

Referring to the proof given by \citet{leviathan23}, our proof is as follows:

First, after one proposal and one verification, the proposed method generates at least one token, so $P(\#tokens = 0) = 0$ and $P(\#tokens = 1) = 1$. If the model generates $i$ tokens ($1 < i < \gamma$), it means the first $i-1$ tokens are accepted and the $i+1$-th token is rejected. Therefore, $P(\#tokens = i) = \alpha^{i-1}(1 - \alpha)$. If the model generates $\gamma$ tokens ($i = \gamma$), it means the first $\gamma-1$ tokens are accepted. Thus, $P(\#tokens = \gamma) = \alpha^{\gamma - 1}$.

\begin{equation}
    \mathbb{E}(\#tokens) = \sum_{i=0}^{\gamma} i P(\#tokens = i) = \frac{1 - \alpha^\gamma}{1 - \alpha}
\end{equation}

Note this value differs from that given by \citet{leviathan23}. This discrepancy is because, in ~\cref{subsec:sd_for_ens}, we do not account for the bonus token.

Assume that the time required to invoke the target model once is $T$, and the time required to invoke a proposal model is $cT$. Therefore, one proposal and one verification together take $(\gamma c + 1)T$ time. The time required to generate one token is $\frac{(1 - \alpha)(\gamma c + 1)}{1 - \alpha^\gamma}T$. In the vanilla collaborative decoding, the time required to generate one token is $(c + 1)T$. Therefore, the improvement factor in total walltime is $\frac{(1 - \alpha^\gamma)(1 + c)}{(1 - \alpha)(\gamma c + 1)}$.

\section{Algorithm Details}
\subsection{Alternate Proposal Framework}
We provide the detailed pseudo-code of the alternate proposal framework (see ~\cref{alg:alternate}), which is introduced in ~\cref{subsec:alter_proposal}. 
\begin{algorithm}[htbp]
   \caption{The alternate proposal framework.  \textsc{Propose} takes the current token sequence \(S\) and a constant \(\gamma\) as inputs and generates \(\gamma\) tokens \(T\). \textsc{Score} feeds a sequence to the current verifier to obtain logits \(L\) and a bonus token \(t\). \textsc{Verify} examines \(T\) to decide whether it should be accepted according to combined logits.}
   \label{alg:alternate}
\begin{algorithmic}
\INPUT Models \(\mathcal{M}_p\), \(\mathcal{M}_q\); proposal lengths \(\gamma_q, \gamma_p\) ; prefix sequence \textit{prefix}.

\STATE \(S\) \(\gets\) \textit{prefix}
\STATE \(C \gets \emptyset \) \Comment{Initialize cached tokens}
\WHILE{not finish}
    \IF{$C = \emptyset $}
        \lineComment{Standard speculative decoding step}
        \STATE proposer \(\gets \mathcal{M}_q\)
        \STATE verifier \(\gets \mathcal{M}_p\)
        \STATE \(T \gets \textsc{Propose}(S, \textrm{proposer.}\gamma)\)
    \ELSE
        \lineComment{Alternate proposal decoding step}
        \STATE \textsc{Swap}(proposer, verifier)
        \STATE \(T \gets C + \textsc{Propose}(S + C, \textrm{proposer.}\gamma - C\textrm{.length}\))
    \ENDIF
    \STATE \(L, t \gets \textsc{Score}(T)\)
    \STATE \(L'=\mathcal{C}^\prime(L)\)
    \STATE \(\textsc{Verify}(T, L')\)
    \IF{all tokens in \(T\) are accepted} 
        \STATE \(C \gets t\)
    \ELSE
        \lineComment{Some tokens are rejected, clear \(C\) and resample from}
        \STATE \(T \gets \text{resample from residual distribution}\)
        \STATE  \(C \gets \emptyset\)
    \ENDIF
    \STATE \(S \gets S + T \)
    
\ENDWHILE
\STATE \textbf{return} \(S\)
\end{algorithmic}
\end{algorithm}

\subsection{CoS Framework}\label{apd:se}
The CoS framework is a generalization of the alternate proposal framework to scenarios involving more than three models. \cref{fig:tetris_general} illustrates our CoS in a three-model scenario, where the models $\mathcal{M}_1, \mathcal{M}_2, \mathcal{M}_3$ have proposal lengths of $\gamma_1 = 3$, $\gamma_2 = 2$, and $\gamma_3 = 1$, respectively.

Specifically, in step 1, the default proposal model is invoked to generate $\gamma_1=3$ proposal tokens: $x_1, x_2, x_3$. In step 2, $\mathcal{M}_2$ is invoked to score $x_1, x_2, x_3$ while naturally generating a bonus token, $x_4$. Since $\mathcal{M}_2$ has a proposal length of $\gamma_2 = 2$, it is then invoked again to generate an additional $\gamma_2 - 1 = 1$ proposal token to complete its proposal. In step 3, $\mathcal{M}_3$ is called to score $x_1, \dots, x_5$ in parallel and generate a bonus token, $x_6$.

At this stage, $x_1, x_2, x_3$ have been scored by all models, allowing the combined distributions $r_1(x), r_2(x), r_3(x)$ to be computed and used for verification, as illustrated in step 3. Assuming that $x_1, x_2, x_3$ are all accepted, their corresponding probability distributions are no longer needed and are discarded, as shown in step 4. Subsequently, in step 5, $\mathcal{M}_1$ is invoked again to score $x_4, x_5, x_6$ and generate new proposal tokens: $x_7, x_8, x_9$.

Next, $x_4$ and $x_5$ undergo verification. If $x_4$ is accepted while $x_5$ is rejected, $x_4$ remains unchanged, whereas $x_5$ is replaced with $x_5'$, which is generated through the resampling phase. Since the 5-th token has changed, all subsequent proposal tokens and probability distributions derived from it become invalid and are discarded, as illustrated in step 6. Once these are removed, the default proposal model is invoked to generate $\gamma_1$ new proposal tokens, similar to step 1.

The corresponding pseudocode is provided in ~\cref{alg:se}.

\begin{figure}[htbp]
    \centering
    \includegraphics[width=0.99\linewidth]{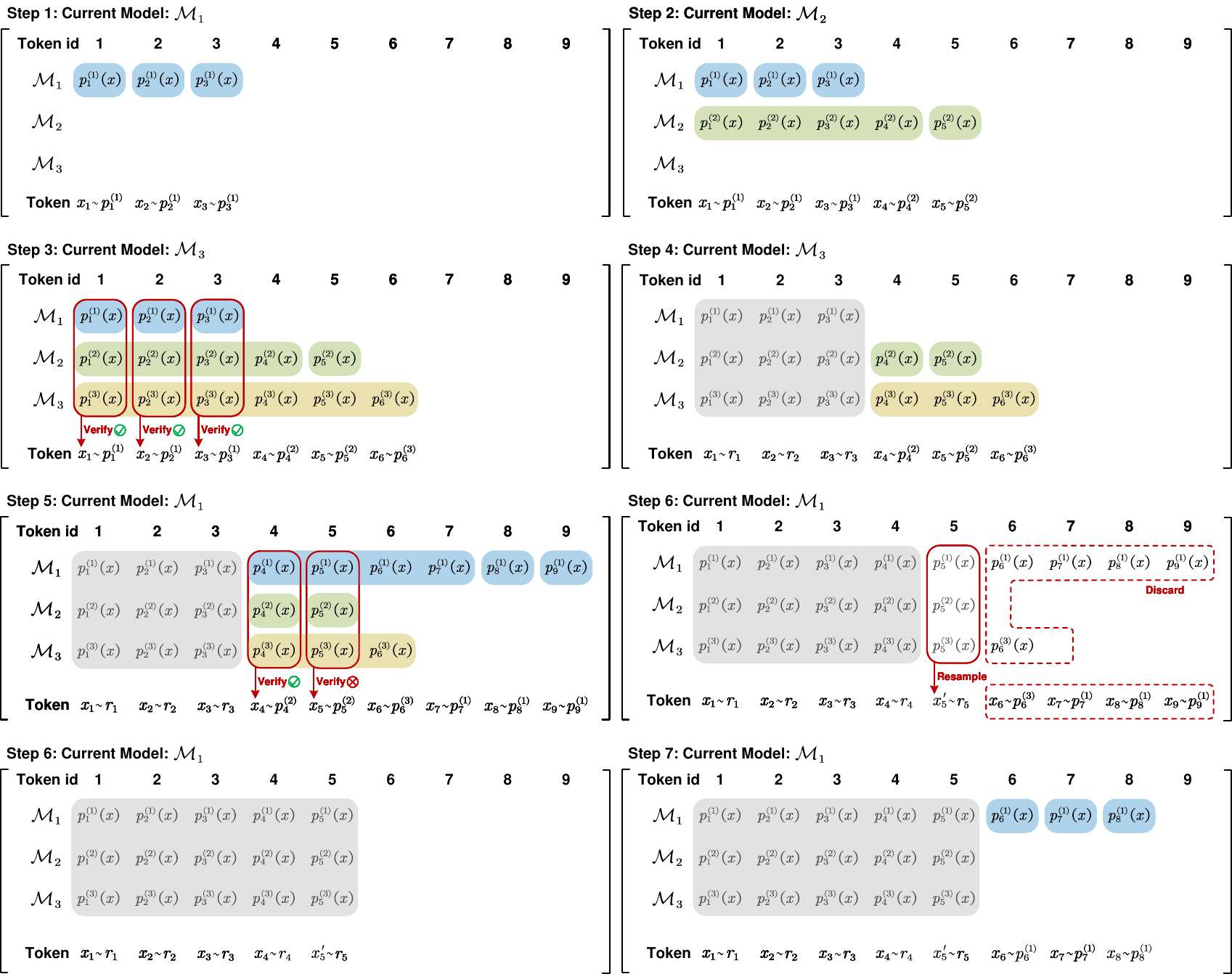}
    \caption{The sketch of CoS framework in three-model scenario. The colored boxes represent the stored probability distributions, while the grey boxes represent the cleared ones. Each invocation involves scoring the current proposal tokens and generating a bonus token. The proposal length for model \(\mathcal M_1, \mathcal{M}_2, \mathcal{M}_3 \) is 3, 2, 1, respectively.}
    \label{fig:tetris_general}
\end{figure}

\begin{algorithm}[htbp]
   \caption{The CoS framework. \textsc{Propose} employs model \(\mathcal M\) to take the current token sequence \(S\) and a constant \(\gamma\) as inputs and generates \(\gamma\) tokens \(T\). \textsc{Score} employs model \(\mathcal M\) to score a sequence to obtain sequence probabilities \(P\), a bonus token \(t\) and its probability \(p\) . \textsc{Verify} examines \(T\) to decide whether it should be accepted according to combined probability distribution.}
   \label{alg:se}
\begin{algorithmic}
\INPUT Models \(\mathcal{M}_1\), \dots, \(\mathcal{M}_n\); proposal lengths \(\gamma_1, \dots \gamma_n\), prefix sequence \textit{prefix}.
\STATE \(S \gets\) \textit{prefix}
\STATE $S_c \gets \emptyset$ \Comment{Initialize cached sequence}
\STATE $C_i \gets \emptyset, \text{ for } i=1,\dots,n$ \Comment{Initialize cached probabilities for each model}
\WHILE{not finish}
    \IF{$S_c = \emptyset$}
        \STATE \(T,\ P \gets \textsc{Propose}(\mathcal{M}_1, S, \gamma_1)\) \Comment{If no cached sequence, default proposer $\mathcal{M}_1$ is invoked to generate proposal}
        \STATE \( S_c \gets T \) \Comment{Cache proposal tokens $T$ and corresponding probabilities $P$}
        \STATE \(C_1 \gets P \)
    \ELSE
        \STATE \(i \gets \arg\min_i |C_i|\) \Comment{Find the model with the shortest cached probabilities, \(|\cdot|\) represents the number of elements}
        \STATE \(P,\ t,\ p \gets \textsc{Score}(\mathcal{M}_i, S_c)\) \Comment{Score the $S_c$, generating probabilities of $S_c$, bonus token $t$ and its probability $p$}
        \STATE \( S_c \gets S_c \cup \{t\}\)
        \STATE \(C_i \gets C_i \cup P \cup \{p\}\) 
        \WHILE{\(\forall j,\ C_j\ne \emptyset\)} 
            \lineComment{If all $C_j$ are nonempty, $t_1$ (the first token of $S_c$) must have been scored by all models, so verify it}
            \STATE \(p' \gets \mathcal{C}(p_{11}, \dots, p_{n1})\) \Comment{\(p_{ij}\) represents the $j$-th probability of $C_i$}
            \STATE \(\textsc{Verify}(p', t_1)\)
            \IF{$t_1$ is accepted}
                \STATE $S_c \gets S_c \backslash \{t_1\}$
                \STATE $C_j  \gets C_j \backslash \{p_{j1}\}$, for $j=1,\dots, n$
            \ELSE
                \STATE $t_1 \gets$ resample a token from residual distribution
                \STATE $S_c \gets \emptyset$
                \STATE $C \gets \emptyset$, for $j=1,\dots, n$
            \ENDIF
            \STATE \(S \gets S \cup \{t_1\}\)
        \ENDWHILE
        \STATE \(T,\ P \gets \textsc{Propose}(\mathcal{M}_i, S+S_c, \gamma_i - 1)\) \Comment{Generate more $\gamma_i - 1$ tokens to finish the proposal}
        \STATE \( S_c \gets S_c \cup T \)
        \STATE \( C_i \gets C_i \cup P \)
    \ENDIF
\ENDWHILE
\STATE \textbf{return} \(S\)
\end{algorithmic}
\end{algorithm}

\section{Additional Results}

\subsection{CoS for Quality-Speed Tradeoff}\label{apd:tradeoff}

As outlined in \ref{subsec:sd_for_ens}, creating a weighted ensemble of the proposal and target models in CoS offers a way to balance the tradeoff between quality and speed. We conducted experiments with the Llama-3 model pair on four datasets, adjusting $\lambda$ from 0.1 to 0.9. As shown in \cref{fig:llama3}, increasing $\lambda$ leads to a steady improvement in inference speed but a gradual decline in performance, allowing users to choose a tradeoff that best suits their needs.

\begin{figure*}[htbp]
    \centering
    \subfloat[HumanEval]{
    \includegraphics[width=0.24\linewidth]{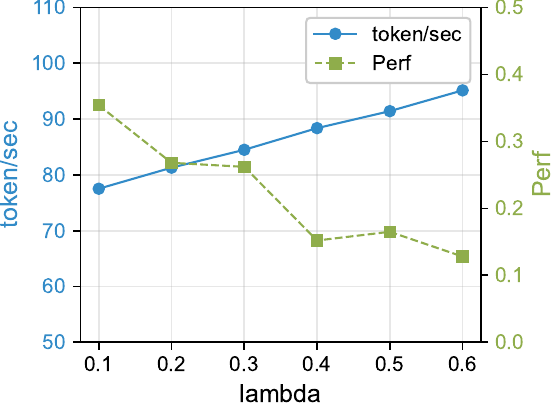}
    }
    \subfloat[GSM8K]{
    \includegraphics[width=0.24\linewidth]{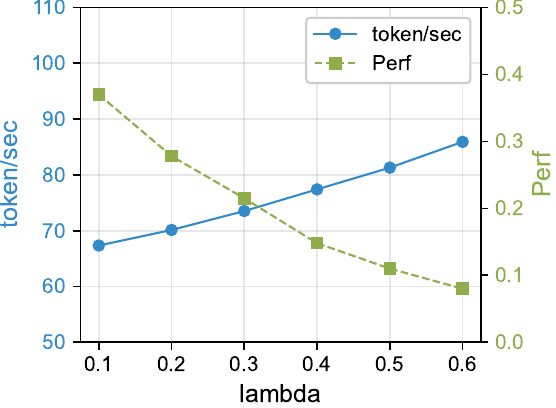}
    }
    \subfloat[MMLU]{
    \includegraphics[width=0.24\linewidth]{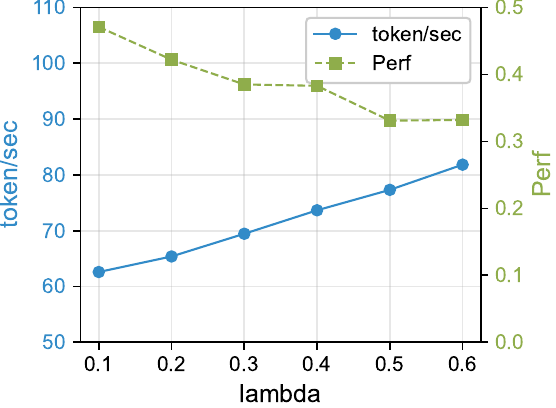}
    }
    \subfloat[CNNDM]{
    \includegraphics[width=0.24\linewidth]{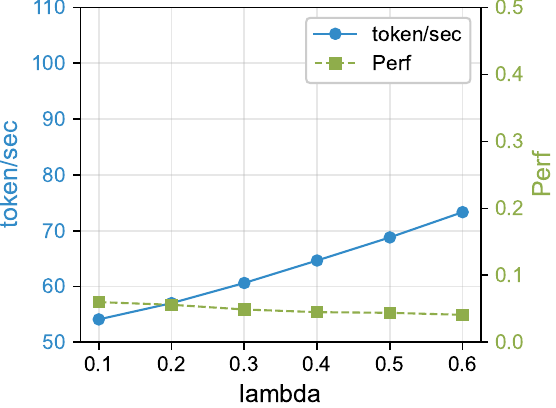}
    }
    \caption{CoS for quality-speed tradeoff}
    \label{fig:llama3}
\end{figure*}

\subsection{Speedup on OPT Model Pair}\label{apd:opt}

The results are shown in \cref{tab:cd_apd}.

\begin{table}[htbp]
    \centering
    \caption{The speedup ratio of each method in CD setting.}
    \label{tab:cd_apd}
    \setlength{\tabcolsep}{2mm}{
        \begin{tabular}{c|cccccc}
            \toprule
            & T & Method & HumanEval & GSM8K & MMLU & CNNDM \\
            \midrule
            \multirow{6}{*}{\rotatebox{90}{Opt}} & \multirow{3}{*}{0} & CD & 1.00x & 1.00x & 1.00x & 1.00x\\
            & & CD-SD & 0.97x & 1.05x & 1.47x & 1.40x \\
            & & CD-CoS  & \textbf{3.28x} & \textbf{2.61x} & \textbf{3.42x} & \textbf{3.95x} \\
            \cmidrule{2-7}
            & \multirow{3}{*}{1} & CD & 1.00x & 1.00x & 1.00x & 1.00x\\
            & & CD-SD & 1.47x & 1.55x & 2.11x & 1.85x \\
            & & CD-CoS  & \textbf{1.69x} & \textbf{1.76x} & \textbf{2.16x} & \textbf{1.85x} \\
            \bottomrule
        \end{tabular}
    }
\end{table}

\subsection{The Raw Speed}\label{apd:raw_speed}

The results are shown in \cref{tab:raw_speed_we} and \cref{tab:raw_speed_cd}.

\begin{table}[htbp]
\centering
\renewcommand{\arraystretch}{1.3}
\caption{The raw speed of each method under WE setting. The table reports the average number of tokens generated per second. Models are of comparable sizes.}\label{tab:raw_speed_we}
\setlength{\tabcolsep}{3mm}{
\begin{tabular}{llcccc}
\toprule
Model Pair & {Method} & {HumanEval} & {GSM8K} & {MMLU} & {CNNDM} \\
\midrule
\multirow{3}{*}{Llama-Vicuna} 
  & WE    & 22.617 & 22.054 & 20.459 & 20.782 \\
  & WE-SD & 28.723 & 26.685 & 24.346 & 23.899 \\
  & WE-CoS & \textbf{35.734} & \textbf{33.522} & \textbf{28.847} & \textbf{30.341} \\
\midrule

\multirow{3}{*}{Qwen-3b}
  & WE    & 49.624 & 49.025 & 47.764 & 46.966 \\
  & WE-SD & 56.075 & 51.966 & 52.062 & 50.723 \\
  & WE-CoS & \textbf{80.390} & \textbf{74.518} & \textbf{67.824} & \textbf{64.813} \\
\midrule

\multirow{3}{*}{Qwen-1.5b}
  & WE    & 82.584 & 77.941 & 79.631 & 76.625 \\
  & WE-SD & 91.668 & 88.073 & 86.001 & 84.287 \\
  & WE-CoS & \textbf{128.831} & \textbf{113.793} & \textbf{106.705} & \textbf{103.443} \\
\midrule

\multirow{3}{*}{Qwen-1.5b}
  & WE    & 57.048 & 53.869 & 53.742 & 56.286 \\
  & WE-SD & 55.050 & 51.082 & 52.666 & 51.927 \\
  & WE-CoS & \textbf{105.485} & \textbf{68.287} & \textbf{74.426} & \textbf{85.980} \\

\bottomrule
\end{tabular}
}
\end{table}

\begin{table}[htbp]
\centering
\renewcommand{\arraystretch}{1.3}
\caption{The raw speed of each method under CD setting. Models are of different sizes.}\label{tab:raw_speed_cd}
\setlength{\tabcolsep}{3mm}{
\begin{tabular}{lllcccc}
\toprule
{Model Pair} & {T} & {Method} & {HumanEval} & {GSM8K} & {MMLU} & {CNNDM} \\
\midrule

\multirow{6}{*}{Llama-3} 
  & \multirow{3}{*}{0} & CD    & 39.387 & 38.735 & 38.969 & 38.100 \\
  &                   & CD-SD & 80.349 & 70.110 & 59.233 & 60.198 \\
  &                   & CD-CoS & \textbf{87.833} & \textbf{77.470} & \textbf{68.975} & \textbf{61.341} \\
  & \multirow{3}{*}{1} & CD    & 38.981 & 38.585 & 39.028 & 38.124 \\
  &                   & CD-SD & 60.421 & 46.688 & 46.834 & 40.793 \\
  &                   & CD-CoS & \textbf{64.319} & \textbf{55.562} & \textbf{51.127} & \textbf{44.986} \\
\midrule

\multirow{6}{*}{Llama-2}
  & \multirow{3}{*}{0} & CD    & 49.314 & 48.129 & 49.700 & 46.215 \\
  &                   & CD-SD & 56.711 & 77.969 & 53.676 & 42.980 \\
  &                   & CD-CoS & \textbf{62.136} & \textbf{79.413} & \textbf{83.496} & \textbf{60.080} \\
  & \multirow{3}{*}{1} & CD    & 47.325 & 47.015 & 47.492 & 43.512 \\
  &                   & CD-SD & 44.486 & 54.537 & 58.415 & 47.863 \\
  &                   & CD-CoS & \textbf{54.424} & \textbf{56.418} & \textbf{65.064} & \textbf{48.298} \\
\midrule

\multirow{6}{*}{OPT}
  & \multirow{3}{*}{0} & CD    & 23.525 & 23.525 & 19.576 & 19.692 \\
  &                   & CD-SD & 23.211 & 24.701 & 28.776 & 27.568 \\
  &                   & CD-CoS & \textbf{78.487} & \textbf{61.400} & \textbf{66.949} & \textbf{77.783} \\
  & \multirow{3}{*}{1} & CD    & 23.934 & 23.422 & 19.492 & 19.671 \\
  &                   & CD-SD & 35.182 & 36.304 & 41.128 & 36.391 \\
  &                   & CD-CoS & \textbf{40.448} & \textbf{41.222} & \textbf{42.102} & \textbf{36.391} \\

\bottomrule
\end{tabular}
}
\end{table}

\subsection{Ablation study on Alternate Proposal Framework}\label{apd:ablation_on_apf}

The results are shown in \cref{tab:ablation_on_apf_we} and \cref{tab:ablation_on_apf_cd}. Note that CD-CoS without APF is equivalent to SCD.

\begin{table}[htbp]
\centering
\renewcommand{\arraystretch}{1.3}
\caption{Ablation study on alternate proposal framework (APF) under weighted ensemble (WE) setting ($\lambda=0.5$). Models are of comparable sizes.}\label{tab:ablation_on_apf_we}
\setlength{\tabcolsep}{3mm}{
\begin{tabular}{llcc}
\toprule
{Model Pair} & {Method} & {HumanEval} & {GSM8K} \\
\midrule

\multirow{2}{*}{Llama-Vicuna}
  & w/o APF & 1.46x & 1.39x \\
  & WE-CoS   & \textbf{1.58x} & \textbf{1.52x} \\
\midrule

\multirow{2}{*}{Qwen-3b}
  & w/o APF & 1.40x & 1.32x \\
  & WE-CoS   & \textbf{1.62x} & \textbf{1.52x} \\

\bottomrule
\end{tabular}
}
\end{table}

\begin{table}[htbp]
\centering
\caption{Ablation study on alternate proposal framework (APF) under contrastive decoding (CD) setting ($\mu=0.1$). Models are of different sizes.}\label{tab:ablation_on_apf_cd}
\begin{tabular}{lllcc}
\toprule
{Model Pair} & {T} & {Method} & {HumanEval} & {GSM8K} \\
\midrule

\multirow{4}{*}{Llama-3}
  & \multirow{2}{*}{0} & w/o APF & 1.98x & 1.72x \\
  &                    & CD-CoS   & \textbf{2.23x} & \textbf{2.00x} \\
  & \multirow{2}{*}{1} & w/o APF & 1.41x & 1.24x \\
  &                    & CD-CoS   & \textbf{1.65x} & \textbf{1.44x} \\
\midrule

\multirow{4}{*}{Llama-2}
  & \multirow{2}{*}{0} & w/o APF & 1.18x & 1.59x \\
  &                    & CD-CoS   & \textbf{1.26x} & \textbf{1.65x} \\
  & \multirow{2}{*}{1} & w/o APF & 0.89x & 1.21x \\
  &                    & CD-CoS   & \textbf{1.15x} & \textbf{1.20x} \\

\bottomrule
\end{tabular}
\end{table}

\subsection{Speedup on NPUs}

The results are shown in \cref{tab:raw_speed_npu} and \cref{tab:speedup_npu}.

\begin{table}[htbp]
\centering
\caption{The raw speed of each method under both WE and CD settings using the Ascend 910B3 NPU.}\label{tab:raw_speed_npu}
\begin{tabular}{llcccc}
\toprule
{Model Pair} & {Method} & {HumanEval} & {GSM8K} & {MMLU} & {CNNDM} \\
\midrule

\multirow{3}{*}{Llama-Vicuna}
  & WE & 7.60 & 5.83 & 2.60 & 2.35 \\
  & WE-SD & 13.59 & 8.73 & 3.81 & 3.55 \\
  & WE-CoS & \textbf{14.14} & \textbf{9.98} & \textbf{4.32} & \textbf{4.06} \\
\midrule
\multirow{3}{*}{Llama-2}
  & CD & 11.98 & 10.63 & 4.96 & 4.51 \\
  & CD-SD & 13.00 & 16.28 & 7.098 & 4.83 \\
  & CD-CoS & \textbf{16.71} & \textbf{16.30} & \textbf{7.54} & \textbf{5.23} \\
\bottomrule
\end{tabular}
\end{table}

\begin{table}[htbp]
\centering
\caption{The speedup ratio of each method under both WE and CD settings using the Ascend 910B3 NPU.}\label{tab:speedup_npu}
\begin{tabular}{llcccc}
\toprule
{Model Pair} & {Method} & {HumanEval} & {GSM8K} & {MMLU} & {CNNDM} \\
\midrule

\multirow{3}{*}{Llama-Vicuna}
  & WE & 1.00x & 1.00x & 1.00x & 1.00x \\
  & WE-SD & 1.79x & 1.50x & 1.47x & 1.51x \\
  & WE-CoS & \textbf{1.86x} & \textbf{1.71x} & \textbf{1.66x }& \textbf{1.73x} \\
\midrule
\multirow{3}{*}{Llama-2}
  & CD & 1.00x & 1.00x & 1.00x & 1.00x \\
  & CD-SD & 1.09x & 1.53x & 1.43x & 1.07x \\
  & CD-CoS & \textbf{1.39x} & \textbf{1.53x} & \textbf{1.52x} & \textbf{1.16x} \\
\bottomrule
\end{tabular}
\end{table}

\end{document}